\newtheorem{example}{Example}
\newtheorem{theorem}{Theorem}
\newtheorem{proposition}{Proposition}
\newtheorem{corollary}{Corollary}
\newtheorem{lemma}{Lemma}
\theoremstyle{definition}
\newtheorem{definition}{Definition}
\newtheorem{remark}{Remark}
\newcommand{\cn}{\operatorname{Cn}}
\newcommand{\Bel}{\operatorname{Bel}}
\newcommand{\Gen}{\operatorname{Gen}}
\newcommand{\Com}{\operatorname{Com}}
\title{On Definite Iterated Belief Revision with Belief Algebras\thanks{This is an extended version of the corresponding IJCAI 2025 paper with the same title.}}
\author{%
Hua Meng$^1$
\and
Zhiguo Long$^2$\thanks{Corresponding author.}\and
Michael Sioutis$^3$\and
Zhengchun Zhou$^1$ \\
\affiliations
$^1$School of Mathematics, Southwest Jiaotong University, China\\
$^2$School of Computing and Artificial Intelligence, Southwest Jiaotong University, China\\
$^3$LIRMM UMR 5506, University of Montpellier \& CNRS, France\\
\emails
\{menghua, zhiguolong\}@swjtu.edu.cn,
michael.sioutis@lirmm.fr,
zzc@swjtu.edu.cn
}
\begin{document}

\maketitle

\begin{abstract}
    Traditional logic-based belief revision research focuses on designing rules to constrain the behavior of revision operators. 
    Frameworks have been proposed to characterize iterated revision rules, but they are often too loose, leading to multiple revision operators that all satisfy the rules under the same belief condition. 
    In many practical applications, such as safety critical ones, it is important to specify a definite revision operator to enable agents to iteratively revise their beliefs in a deterministic way. 
    In this paper, we propose a novel framework for iterated belief revision by characterizing belief information through preference relations. Semantically, both beliefs and new evidence are represented as \emph{belief algebras}, which provide a rich and expressive foundation for belief revision. 
    Building on traditional revision rules, we introduce additional postulates for revision with belief algebra, including an upper-bound constraint on the outcomes of revision. We prove that the revision result is uniquely determined given the current belief state and new evidence. Furthermore,  to make the framework more useful in practice, we develop a particular algorithm for performing the proposed revision process. We argue that this approach may offer a more predictable and principled method for belief revision, making it suitable for real-world applications.
\end{abstract}

\section{Introduction}

Updating or revising the beliefs of an agent in light of new evidence is a fundamental process in both everyday life and scientific activities. For instance, Newton's laws of motion were widely accepted for centuries until discoveries at very small scales or very high speeds revealed their limitations. Similarly, our knowledge is continuously updated and enriched through learning and communication. To formalize this process, researchers in artificial intelligence have developed the subfield of \emph{belief change} (see, e.g., \cite{Doyle79,1976Rational}). Among the most influential contributions to this field is the AGM framework~\cite{alchourron1985logic}, which has inspired numerous extensions and applications in areas such as game theory and argumentation~\cite{Williams1996ApplicationsOB,vanharmelenHandbook2008,Hansson2011Special,DillerHLRW15,Zhang10}.

\paragraph{Motivation}

The AGM framework addresses various forms of belief change, including \emph{belief revision}, which is of particular interest in this paper. The AGM framework, along with its various subsequent developments, aims to update a current belief state to a new one when new evidence is acquired. This process can be formally described syntactically as revising a set of logical formulas (belief set) with a single logical formula (new evidence)~\cite{alchourron1985logic}. Alternatively, it can be characterized semantically using preferences over possible worlds (total preorder)~\cite{katsuno1991propositional}, where the revision process involves updating these preferences based on new evidence. Furthermore, belief revision based on partial preorders and iterated belief revision have been extensively studied~\cite{Lehmann95,boutilier1996iterated,darwiche1997logic,nayak2003dynamic,BoothC16,AravanisPW19,IsbernerHH24}. Researchers have primarily focused on how to constrain belief revision behaviors through postulates or how to characterize revision with new semantics, leading to the proposal of various rule systems~\cite{Liberatore24,Bonanno25}. These rule systems are evaluated from different perspectives to build a comprehensive framework. However, these rule systems are often too loose, resulting in multiple revision operators that all satisfy the same set of rules, which might not be desirable in practice. 
For example, in scenarios where multiple intelligent agents collaborate in urban traffic management, their initial beliefs are aligned. Upon acquiring new traffic condition information, these agents must update their traffic control strategies deterministically so that they maintain aligned beliefs, otherwise the transportation system might descend into chaos.

\paragraph{Contributions}

The goal and main contribution of this paper is not to establish a new rule system for belief revision, but to explore a simple and effective way to represent belief information in more depth, and to propose a definite revision operator suitable for applications requiring deterministic revision.
To this end, we use \emph{belief algebra} (introduced in~\cite{meng2015belief}) as the foundational tool for representing belief information. Unlike preorders over worlds, a belief algebra represents belief information as a preference relation over subsets of worlds. In our framework, both the current belief state and the new evidence are represented as belief algebras. Specifically, the iterated belief revision process is modeled as revising a belief algebra $G_1$ with another belief algebra $G_2$ to produce a new belief algebra $G_3$. By analyzing the structural properties of a belief algebra, we propose a set of revision postulates, including an upper bound constraint on the outcomes of revision, and prove that these postulates inherently induce a \emph{unique} revision operator. Additionally, we discuss the algorithmic implementation of this iterated belief revision framework, providing practical support for designing agents with belief revision capabilities in applications.

\paragraph{Organization}

The rest of the paper is organized as follows: Section 2 discusses related work and Section 3 introduces basic knowledge. Section 4 explores properties of belief algebras, and Section 5 considers a special case of revision with a belief algebra. Section 6 extends the discussion to general cases and Section 7 describes the practical algorithm. Finally, Section~8 concludes the paper.

  \section{Related Work}

  Various research efforts have focused on representing belief information in iterated revision. Spohn~\shortcite{spohn1988ordinal} introduced the concept of \emph{ordinal conditional function} (OCF) to encode preference information over worlds and developed a process called \emph{conditionalization} to revise OCFs. Williams~\shortcite{1994Transmutations} proposed a formula-based counterpart to OCF, mapping formulas to ordinals based on their resistance to change. Darwiche and Pearl~\shortcite{darwiche1997logic} advanced the field by representing belief information as total preorders on worlds and extending the AGM framework with four postulates to characterize iterated revision.
  
  Several researchers have improved the DP (Darwiche and Pearl) framework's settings. Benferhat et al.~\shortcite{benferhat2005revision} used partial preorders, while Peppas and Williams~\shortcite{peppas2014belief} employed semiorders. Ma et al.~\shortcite{ma2015a} revised epistemic states with partial epistemic states. Andrikopoulou et al.~\shortcite{andrikopoulou-2025} discussed belief revision under filters that are subsets of partially ordered sets. Benferhat et al.~\shortcite{benferhat2000iterated} enhanced the representation of new evidence by using an epistemic state, proposing postulates for minimal-model preserving operators and proving the uniqueness of the revision result given a total preorder and a new total preorder as evidence. This aligns with our discussion in Section~\ref{sec:rtot}, as their operator satisfies (RE1)--(RE3). Meanwhile, new frameworks and semantic structures continue to be proposed~\cite{Liberatore24,Bonanno25}.
  
  Many works aim to define the most ``reasonable'' revision rules, yet achieving consensus on rationality remains challenging. These debates often resemble philosophical discussions, focusing on abstract principles rather than practical implementations. 
  The rationality of the basic AGM rules has been questioned~\cite{Aravanis23}. 
  Sauerwald and Thimm~\shortcite{SauerwaldT24} considered the realizability of AGM revision and contraction operators in Epistemic Spaces.
  Some researchers, such as~\cite{booth2006admissible,jin2007iterated,nayak2003dynamic}, have observed that the DP framework can produce counter-intuitive revision results. To address that issue, they proposed modifications or additions to the DP postulates. For instance, Nayak, Pagnucco, and Peppas~\shortcite{nayak2003dynamic} introduced \emph{conjunction} postulates, treating consistent evidences as order-independent, while Jin and Thielscher~\shortcite{jin2007iterated} proposed a weaker \emph{independence} postulate. 
  Notably, these works primarily focus on rule construction, with limited attention to operator selection.
  
  While the exploration of revision rules continues, the potential and application prospects of belief revision have also received significant attention~\cite{HunterB22,BaroniFGS22}. This shift reflects a growing interest in leveraging belief revision for practical, real-world problems. In applications such as constrained differential privacy~\cite{LiuSZF23} and text generation by large language models~\cite{Bryan24}, the focus has shifted towards applications of revision. In industrial agent design, selecting a specific update algorithm is crucial to enable agents to iteratively refine their beliefs. To address this, our work focuses on representing belief information through \emph{belief algebra} building on~\cite{meng2015belief}, and proposes a unique revision operator by strengthening revision rules in a natural way, aiming to provide a deterministic and practical solution for real-world applications.

  \section{Preliminaries}\label{sec:pre}
We recall some necessary knowledge for what will follow.

  \subsection{Belief as A Total Preorder}
  
  In this paper, we restrict our discussion to belief revision in a finite propositional language $L$. We denote by $W$ the set of all (possible) worlds (i.e., interpretations).  
  
  For each propositional formula $\psi$, we denote by $[\psi]$ the set of all worlds of $\psi$, i.e., $[\psi] = \{\omega \in W \mid \omega \models \psi\}$. 
  We will also use a consequence operator $\cn(\Gamma) = \{\varphi\in L \mid [\Gamma] \subseteq [\varphi]\}$ to obtain the set of formulas implied by $\Gamma$.
  
  A (partial) \emph{preorder} $\preceq$ on $A$ is a binary relation on $A$ that is reflexive and transitive. A preorder $\preceq$ is called \emph{total} if any two elements in $A$ are comparable under $\preceq$.
  We  write $x\sim y$ if $x\preceq y$ and $y\preceq x$, and  $x\prec y$  if $x\preceq y$ but $y\not\preceq x$. 
  The \emph{strict part} of a preorder $\preceq$ is the set $\prec=\{(x,y) \mid x \prec y\} \subseteq \preceq$.
  
  A belief set $\mathcal{K}$ is a set of formulas in $L$ that is deductively closed, i.e., $\cn(\mathcal{K}) = \mathcal{K}$. Generally, belief revision is the process of changing a current belief state with a new piece of evidence, where the current belief state and the new evidence can be represented in different ways. For example, in the AGM framework, the current belief state is represented as a belief set and the new evidence is represented as a formula. 
  
  When belief revision needs to be done sequentially, known as \emph{iterated belief revision}, the representation mechanism of the AGM framework is not suitable anymore. 
  A more sophisticated structure known as \emph{epistemic state} is then used to represent belief information. 
  In their original paper, Darwiche and Pearl~\shortcite{darwiche1997logic} captured the concept of an epistemic state in terms of a revision operator. 
  Particularly, an epistemic state is a set of beliefs and \emph{conditional beliefs} satisfying several postulates.
  A conditional belief has the form $(\beta\mid\alpha)$, where $\alpha, \beta$ are formulas in $L$. An agent has a conditional belief $(\beta\mid\alpha)$ if she will believe $\beta$ whenever she believes $\alpha$. 
      Within the DP framework, an epistemic state can be characterized semantically as a total preorder on worlds as follows.
    \begin{lemma}[~\cite{darwiche1997logic}]Suppose $\Psi$ consists of a belief set $\Bel(\Psi)$ and several conditional beliefs, then $\Psi$ is an epistemic state iff there is a total preorder $\preceq$ on worlds such that:
      \begin{enumerate}
      \item[](ES1) $\phi\in \Bel(\Psi)$ iff there is a world $\omega$ in $[\phi]$ such that $\omega\prec\omega'$ for all $\omega'\in[\neg\phi]$.
      \item[](ES2) $(\beta\mid\alpha)\in \Psi$ iff there is a world $\omega$ in $[\alpha\wedge\beta]$ such that $\omega\prec\omega'$ for all $\omega'\in[\alpha\wedge\neg\beta]$.
      \end{enumerate}
      \end{lemma}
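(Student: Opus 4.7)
The plan is to establish the biconditional correspondence between epistemic states and total preorders on worlds by constructing each from the other and verifying the stipulated properties (ES1) and (ES2).

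For the forward direction, assume $\Psi$ is an epistemic state. The idea is to extract a total preorder $\preceq$ on $W$ from the conditional beliefs of $\Psi$. For each world $\omega\in W$, let $\phi_\omega$ denote a complete characteristic formula whose unique model is $\omega$. I would define $\omega_1\preceq\omega_2$ iff the conditional belief $(\phi_{\omega_1}\mid\phi_{\omega_1}\vee\phi_{\omega_2})\in\Psi$, with $\omega_1\sim\omega_2$ when both directions hold. Reflexivity follows from the trivial conditional $(\phi_\omega\mid\phi_\omega)$, and totality follows from the coherence of $\Psi$ on the disjunction $\phi_{\omega_1}\vee\phi_{\omega_2}$: at least one of the two disjuncts must be conditionally believed. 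With $\preceq$ in hand, (ES1) is obtained by noting that $\phi\in\Bel(\Psi)$ is equivalent to $(\phi\mid\top)\in\Psi$ (so that the unconditional belief isolates some model of $\phi$ strictly below every countermodel), and (ES2) is essentially the definition applied to arbitrary antecedents $\alpha$ and consequents $\beta$ rather than single-world characteristic formulas.

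For the converse, assume a total preorder $\preceq$ on $W$ is given. Define $\Bel(\Psi)$ and the conditional beliefs of $\Psi$ directly by (ES1) and (ES2). The task is then to verify that the resulting $\Psi$ satisfies the epistemic-state postulates of Darwiche and Pearl, in particular that $\Bel(\Psi)$ is deductively closed and that the conditional beliefs cohere with the unconditional ones and with each other (reduction to $\Bel(\Psi)$ when the antecedent is $\top$, compatibility with Boolean combinations of antecedents, and so on). Each of these properties reduces to routine reasoning about $\preceq$-minimal elements of sets of the form $[\alpha]$ and their stability under restriction to sub-antecedents.

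The main obstacle is transitivity of $\preceq$ in the forward direction: from $(\phi_{\omega_1}\mid\phi_{\omega_1}\vee\phi_{\omega_2})\in\Psi$ and $(\phi_{\omega_2}\mid\phi_{\omega_2}\vee\phi_{\omega_3})\in\Psi$ one must derive $(\phi_{\omega_1}\mid\phi_{\omega_1}\vee\phi_{\omega_3})\in\Psi$. This requires lifting the two given conditionals to a common antecedent $\phi_{\omega_1}\vee\phi_{\omega_2}\vee\phi_{\omega_3}$, deducing that $\phi_{\omega_1}$ is conditionally believed there, and then projecting back down to the disjunction $\phi_{\omega_1}\vee\phi_{\omega_3}$. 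This step uses the full strength of the epistemic-state postulates governing the interaction of conditional beliefs with disjunctive antecedents and is the only place where reflexivity and totality do not suffice.
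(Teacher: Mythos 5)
You should know at the outset that the paper itself contains no proof of this lemma: it is background quoted from Darwiche and Pearl (1997), so your sketch can only be compared against the standard representation-theorem argument in that literature (Katsuno--Mendelzon style), which is indeed the route you chose --- extract a canonical preorder from conditional beliefs over two-world antecedents, then verify (ES1) and (ES2).

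Measured against that standard argument, your construction has a genuine flaw. Defining $\omega_1\preceq\omega_2$ iff $(\phi_{\omega_1}\mid\phi_{\omega_1}\vee\phi_{\omega_2})\in\Psi$ captures, for distinct worlds, the \emph{strict} relation rather than the weak one: conditionally believing $\phi_{\omega_1}$ given $\phi_{\omega_1}\vee\phi_{\omega_2}$ means the hypothetical revision excludes $\omega_2$ entirely, i.e., $\omega_1\prec\omega_2$. Your totality argument then fails exactly in the case of ties: if the agent, upon hypothetically accepting $\phi_{\omega_1}\vee\phi_{\omega_2}$, remains undecided between the two worlds (e.g., a state of complete ignorance), then \emph{neither} disjunct is conditionally believed --- only the disjunction itself is --- so neither $\omega_1\preceq\omega_2$ nor $\omega_2\preceq\omega_1$ holds under your definition. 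Moreover, your clause ``$\omega_1\sim\omega_2$ when both directions hold'' can never fire for distinct worlds, since believing both $\phi_{\omega_1}$ and $\phi_{\omega_2}$ under the same satisfiable antecedent would make the conditional belief set inconsistent, contradicting the coherence postulates of an epistemic state. The standard repair defines the weak relation negatively: $\omega_1\preceq\omega_2$ iff $(\neg\phi_{\omega_1}\mid\phi_{\omega_1}\vee\phi_{\omega_2})\notin\Psi$, i.e., $\omega_1$ survives the hypothetical revision; totality then follows from consistency of revision by a satisfiable formula, and ties come out as genuine equivalences. A second gap: your claim that (ES2) is ``essentially the definition'' applied to arbitrary antecedents is not right. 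Since $\preceq$ is defined only from two-world antecedents, one must prove that the models of $\{\beta : (\beta\mid\alpha)\in\Psi\}$ are exactly the $\preceq$-minimal models of $[\alpha]$ for arbitrary $\alpha$; this is the substantive content of the representation theorem, and it consumes the same disjunctive-antecedent postulates that you invoke only for transitivity. Your converse direction and your diagnosis of where the postulates are needed for transitivity are otherwise sound.
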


  \subsection{Belief as A Belief Algebra}
      To provide a unified representation of belief information, the concept of \emph{belief algebra} was introduced in~\cite{meng2015belief}. It is a class of ordering structures on $2^W$, the power set of possible worlds, which can intuitively capture the belief preference of an agent, and is actually more general than total preorder (see Section~\ref{sec:rtot}).
  
      \begin{definition}[\cite{meng2015belief}]\label{dfn:ba}
       Suppose $\gg$ is a binary relation on $2^{W}$, and write
       $R_W=\{(U,V)\mid U, V\subseteq W, U\cap V=\varnothing\}$.
       Then $(2^W,\gg)$ is called a \emph{belief algebra} (BA) if it satisfies the following rules ($U,V,U_1,V_1,U_2,V_2 \subseteq W$):
      \begin{itemize}
       \item[](A0)  $\gg\,\subseteq R_W$.
       \item[](A1)  $U\gg\varnothing$ iff $U\neq\varnothing$.
       \item[](A2)  If $U\gg V$, then $V \not\gg U$.
       \item[](A3)  If $U_1\supseteq U\gg V\supseteq V_1$ and $U_1\cap V_1=\varnothing$, then $U_1\gg V_1$.
       \item[](A4) If $U=U_1\cup V_1=U_2\cup V_2$ and $U_1\gg V_1$, $U_2\gg V_2$, then $U_1\cap U_2\gg V_1\cup V_2$.
      \end{itemize}
      \end{definition}
      Roughly speaking, $\gg$  directly describes the belief preference of agents in a semantic way. 
      For instance, $[\phi]\gg[\neg\phi]$ means that $\phi$ is more believable than $\neg\phi$, and if $[\phi]$ and $[\psi]$ are incomparable in $\gg$, then the agent 
      has no idea which one is more believable. (A0) shows that we only need to compare disjoint subsets of $W$. (A1) shows that each nonempty set has a higher preference level than the empty set. (A2) shows that $\gg$ is a strict ordering. (A3) shows that $\gg$ satisfies certain transitivity. (A4) considers the case where if $\phi$ is more believable than $\neg\phi$ and
      $\psi$ is more believable than $\neg\psi$, then $\phi\wedge\psi$ is more believable than $\neg(\phi\wedge\psi)$.
    
    Traditionally, the semantic characterization of an epistemic state is often represented as a \emph{total preorder} on $W$, where $W$ is the set of all possible worlds. In contrast, a \emph{belief algebra} is defined as an ordering relation on $2^W$, the power set of $W$. Interestingly, a total preorder on $W$ can be naturally extended to a belief algebra on $2^W$. This relationship can be formalized in the following theorem:
    
    \begin{theorem}[\cite{meng2015belief}]~\label{thm:cba}
      Let $\preceq$ be a total preorder on $W$. Define a binary relation $\gg$ on $2^W$ as follows: $U \gg V $\quad \text{if and only if} \quad $U \cap V = \emptyset$ \text{ and } $\exists \omega_1 \in U$ \text{ such that } $\forall \omega_2 \in V, \omega_1 \prec \omega_2.$ Then $(2^W, \gg)$ is a belief algebra.
    \end{theorem}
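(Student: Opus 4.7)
The plan is to verify each of the five axioms (A0)--(A4) in turn, using the defining property of $\gg$ together with the fact that $\preceq$ is a total preorder whose strict part $\prec$ is irreflexive, transitive, and asymmetric. Axioms (A0) and (A1) are immediate from the definition: disjointness is built in, and for $V = \varnothing$ the universal quantifier over witnesses in $V$ is vacuous, so any element of a nonempty $U$ serves as witness. For (A2), I would argue by contradiction: if $U \gg V$ with witness $\omega_1 \in U$ and simultaneously $V \gg U$ with witness $\omega_2 \in V$, then the two strict inequalities $\omega_1 \prec \omega_2$ (by the first witness applied to $\omega_2 \in V$) and $\omega_2 \prec \omega_1$ (by the second applied to $\omega_1 \in U$) contradict asymmetry of $\prec$. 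Axiom (A3) is also routine: a witness $\omega \in U \subseteq U_1$ for $U \gg V$ also witnesses $U_1 \gg V_1$, since $V_1 \subseteq V$ only shrinks the set over which $\omega \prec \cdot$ must hold, and disjointness $U_1 \cap V_1 = \varnothing$ is given.

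The main obstacle is axiom (A4), where the interaction between the two partitions $U = U_1 \cup V_1 = U_2 \cup V_2$ must be exploited. Here I would first dispose of the easy disjointness condition $(U_1 \cap U_2) \cap (V_1 \cup V_2) = \varnothing$ by observing $U_i \cap V_i = \varnothing$ from (A0). Next, I would pick witnesses $\omega_1^\star \in U_1$ and $\omega_2^\star \in U_2$ for the two hypotheses $U_1 \gg V_1$ and $U_2 \gg V_2$, respectively, and then use totality of $\preceq$ to split into two symmetric cases according to whether $\omega_1^\star \preceq \omega_2^\star$ or $\omega_2^\star \prec \omega_1^\star$.

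In the case $\omega_1^\star \preceq \omega_2^\star$, since $\omega_1^\star \in U = U_2 \cup V_2$, it either lies in $U_2$ or in $V_2$. If $\omega_1^\star \in V_2$, then the witness $\omega_2^\star$ forces $\omega_2^\star \prec \omega_1^\star$, contradicting $\omega_1^\star \preceq \omega_2^\star$. Hence $\omega_1^\star \in U_1 \cap U_2$, and I claim this single element witnesses $U_1 \cap U_2 \gg V_1 \cup V_2$: for $\omega' \in V_1$ we already have $\omega_1^\star \prec \omega'$; for $\omega' \in V_2$ the chain $\omega_1^\star \preceq \omega_2^\star \prec \omega'$, combined with the asymmetry/transitivity of $\preceq$, yields $\omega_1^\star \prec \omega'$. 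The other case is fully symmetric with the roles of $\omega_1^\star$ and $\omega_2^\star$ swapped. Throughout I will also briefly handle the degenerate subcases where one of $V_1$, $V_2$ is empty, which reduce to (A1). I expect the delicate bookkeeping to be ensuring that the chosen witness actually lies in the intersection $U_1 \cap U_2$, which is precisely what the case analysis above accomplishes.
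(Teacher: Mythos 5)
The paper imports this theorem from Meng et al.\ (2015) and states it without proof, so there is no in-paper argument to compare against; your proposal has to stand on its own, and it does. The verification of (A0)--(A3) is indeed routine and your sketches are correct, including the asymmetry-of-$\prec$ contradiction for (A2). The only step with real content is (A4), and your handling of it is sound: the disjointness $(U_1\cap U_2)\cap(V_1\cup V_2)=\varnothing$ follows from $U_i\cap V_i=\varnothing$; picking witnesses $\omega_1^\star\in U_1$, $\omega_2^\star\in U_2$ and comparing them via totality, the $\preceq$-smaller witness cannot lie in the opposing $V_i$ (it would then be strictly dominated by the other witness, contradicting the comparison), so it lies in $U_1\cap U_2$, and the chaining $\omega_1^\star\preceq\omega_2^\star\prec\omega'$ does yield $\omega_1^\star\prec\omega'$ since $\preceq\circ\prec\subseteq\prec$ holds in any preorder (if $\omega'\preceq\omega_1^\star$ then transitivity would give $\omega'\preceq\omega_2^\star$, contradicting $\omega_2^\star\prec\omega'$). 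Two minor observations: totality of $\preceq$ is genuinely load-bearing in your case split --- exhaustiveness of ``$\omega_1^\star\preceq\omega_2^\star$ or $\omega_2^\star\prec\omega_1^\star$'' fails for partial preorders, which is consistent with the theorem's hypothesis; and the degenerate subcases $V_1=\varnothing$ or $V_2=\varnothing$ that you propose to treat separately actually need no special handling, since the witness for $U_i\gg\varnothing$ is an arbitrary element of $U_i$ and your generic argument then goes through with the quantification over the empty $V_i$ holding vacuously.
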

    
    This theorem demonstrates that the structure of a total preorder on $W$ can be lifted to a belief algebra on $2^W$, preserving the agent's belief preferences in a more expressive and generalized framework. The relation $\gg$ captures the intuition that a subset $U$ is preferred over $V$ if there exists at least one world in $U$ that is strictly preferred to all worlds in $V$. This extension provides a natural bridge between traditional epistemic states and the more general belief algebra framework.
    
    \begin{definition}[Complete Belief Algebra (CBA)]\label{dfn:cba}
    A belief algebra $(2^W, \gg)$ is called a \emph{complete belief algebra} (CBA) if it can be generated from a total preorder $\preceq$ on $W$ in the manner described in Theorem~\ref{thm:cba}.  
    \end{definition}
    Complete belief algebras provide a natural connection between total preorders on $W$, and the more expressive framework of belief algebras. They capture the intuition that belief preferences over subsets of worlds can be fully determined by a total preorder on individual worlds.
  \begin{example}
  Suppose $\omega_1\sim\omega_2\prec\omega_3\sim\omega_4$ is a total preorder and thus an epistemic state. Let $W=\{\omega_1,\omega_2,\omega_3,\omega_4\}$ and $Tr(W)=\{(U,\emptyset)\mid U\subseteq W, U\neq\emptyset\}$. For the sake of brevity we use $(x,y)$ for representing $\{\omega_x\}\gg\{\omega_y\}$, $(xy,z)$ for $\{\omega_x,\omega_y\}\gg\{\omega_z\}$, and so on. Then this total preorder is
  equivalent to a complete belief algebra $G$ as follows:
  \begin{align*}
  G &=\{(1,3),(1,4),(2,3),(2,4),(1,34), (2,34) \\
  &\quad (12,3), (12,4), (12,34),(123,4),(124,3)\} \cup Tr(W).
  \end{align*}
  \end{example}

  \section{Exploring Belief Algebra in More Depth}\label{sec:balg}

Given two belief algebras $G_1=(2^W,\gg_1)$ and $G_2=(2^W,\gg_2)$ on $W$, we denote by $G_1\subseteq G_2$ iff $\gg_1\subseteq\gg_2$. Also, we will not distinguish between $G_1\cup G_2$ and $\gg_1\cup \gg_2$, and $G_1\cap G_2$ and $\gg_1\cap \gg_2$, whenever it is self-explanatory based on context.
  For each belief algebra $G=(2^W,\gg)$, $Tr(W)=\{(U,\emptyset)\mid U\subseteq W, U\neq\emptyset\}$ is always contained in $\gg$ by $(A1)$. 
  It is not difficult to verify that $(2^W,Tr(W))$ is a belief algebra. Then for each belief algebra $G$ on $W$, we always have $(2^W,Tr(W))\subseteq G$. 
  
  A subset of $R_W$ (defined in Definition~\ref{dfn:ba}) can generate a belief algebra.
  \begin{definition}\label{dfn:gen}
   Given $\Omega\subseteq R_W$, we denote by $\Gen(\Omega)$ the smallest subset of $R_W$ that contains $\Omega$ and is closed under the rules (A1), (A3) and (A4) used for expansion. 
  \end{definition}
  \begin{example}\label{eg:gen}
  Suppose $W=\{\omega_1,\omega_2,\omega_3,\omega_4\}$ and $\Omega=(\{\omega_1,\omega_2\},\{\omega_3,\omega_4\})$. Then for each belief algebra $(2^W,\gg)$, if $\Omega\in\gg$ we have $\{\omega_1,\omega_2\}\gg\{\omega_3,\omega_4\}$. Again, for the sake of brevity we use $(x,y)$ for representing $\{\omega_x\}\gg\{\omega_y\}$, $(xy,z)$ for $\{\omega_x,\omega_y\}\gg\{\omega_z\}$, and so on. 
  With respect to~(A3) we have that $(123,4),(124,3)$ are all in $\gg$, and by (A4) $(12,34)\in \gg$. Then it can be verified that $\Gen(\Omega)=\{(12,34),(123,4),(124,3)\}\cup Tr(W)$ is the smallest belief algebra which contains $\Omega$. 
  \end{example}
  If $\Gen(\Omega)$ also satisfies (A2), then $(2^W,\Gen(\Omega))$ is a belief algebra by definition, and we will also use $\Gen(\Omega)$ to denote this belief algebra. Here we always assume that the agent is rational and $\Gen(\Omega)$ is always a belief algebra when $\Omega$ represents some belief information. Given $\Omega$, $\Gen(\Omega)$ can be obtained by closing $\Omega$ under (A1), (A3), and (A4).

  With the operator $\Gen(\cdot)$, one can see that each CBA is entirely determined by preferences on the sets consisting of a single world.

  There is a 1-1 correspondence between CBAs and total preorders, which is given as follows.
  \setcounter{lemma}{3}
  \begin{lemma}[\cite{meng2015belief}]\label{pro:cba-to}
    There is a 1-1 correspondence between CBAs and total preorders:
    \begin{itemize}
      \item Suppose $\preceq$ is a total preorder on $W$. 
      Let $U \gg V$ iff $\exists \omega_1 \in U$ s.t. $\forall \omega_2 \in V$, $\omega_1 \prec \omega_2$. Then $(2^W, \gg)$ is a CBA.
      \item Suppose $(2^W, \gg)$ is a CBA. Let $\omega_1 \prec \omega_2$ iff $\{\omega_1\} \gg \{\omega_2\}$, and $\omega_1 \sim \omega_2$ iff $\{\omega_1\} \not\gg \{\omega_2\}$ and  $\{\omega_2\} \not\gg \{\omega_1\}$. Then $\preceq := \prec \cup \sim$ is a total preorder on $W$.
    \end{itemize}
  \end{lemma}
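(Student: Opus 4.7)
The plan is to prove both directions of the correspondence and then verify that the two mappings are mutual inverses, leveraging the fact that a CBA is, by Definition~\ref{dfn:cba}, already known to arise from some total preorder.

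For the first bullet, there is essentially nothing new to do: Theorem~\ref{thm:cba} already guarantees that the relation $\gg$ built from $\preceq$ satisfies (A0)--(A4), hence is a belief algebra, and Definition~\ref{dfn:cba} tells us that because it was obtained from a total preorder via the prescribed construction, it is in fact a CBA. So I would dispatch this direction in one or two lines.

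For the second bullet, the main move is to show that the singleton layer of any CBA exactly reproduces the generating total preorder. Concretely, let $(2^W,\gg)$ be a CBA. By Definition~\ref{dfn:cba}, there exists a total preorder $\preceq^{*}$ on $W$ such that $U \gg V$ iff $U\cap V=\varnothing$ and $\exists\omega\in U$ with $\omega\prec^{*}\omega'$ for all $\omega'\in V$. Specializing to singletons yields the key identity
\[
\{\omega_1\}\gg\{\omega_2\} \;\iff\; \omega_1\prec^{*}\omega_2,
\]
so the $\prec$ extracted in the statement coincides with $\prec^{*}$. Consequently $\sim$ extracted in the statement coincides with $\sim^{*}$, and thus $\preceq\,=\,\preceq^{*}$, which is already known to be a total preorder. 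Reflexivity, transitivity and totality therefore come for free; I would not try to prove them from the (A0)--(A4) axioms directly, since the CBA hypothesis lets us inherit them from $\preceq^{*}$.

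It remains to show that the two maps are mutually inverse. Going $\preceq \mapsto \gg \mapsto \preceq'$: applying the singleton identity above to the $\gg$ built from $\preceq$ immediately gives $\prec'\,=\,\prec$ and hence $\preceq'\,=\,\preceq$. Going $\gg \mapsto \preceq \mapsto \gg'$: write $\preceq^{*}$ for the generating preorder of the CBA $\gg$; the second bullet shows $\preceq = \preceq^{*}$, and then $\gg'$, being the CBA built from $\preceq$, agrees on every pair $(U,V)$ with the CBA built from $\preceq^{*}$, which is $\gg$ itself.

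The only real subtlety — and what I would flag as the main obstacle if one tried to be stubborn and work purely axiomatically — is establishing transitivity of the extracted $\preceq$ from (A0)--(A4) alone; (A3) is only monotonicity in the two arguments and (A4) is a conjunction-style rule, so chaining $\{\omega_1\}\gg\{\omega_2\}$ with $\{\omega_2\}\gg\{\omega_3\}$ into $\{\omega_1\}\gg\{\omega_3\}$ is not literally an axiom. Using the completeness assumption (that the CBA is generated from some $\preceq^{*}$) sidesteps this issue entirely, and is the reason the proof collapses to the singleton identity.
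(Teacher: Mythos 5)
Your proposal is correct, but note that there is nothing in this paper to compare it against: Lemma~\ref{pro:cba-to} is imported from \cite{meng2015belief} and stated without proof, so the only standard available here is the paper's own definitions. Against that standard your argument is sound and, indeed, essentially forced: since Definition~\ref{dfn:cba} \emph{defines} a CBA as a belief algebra generated from some total preorder via Theorem~\ref{thm:cba}, the first bullet is immediate, and the second bullet reduces, exactly as you say, to the singleton identity $\{\omega_1\}\gg\{\omega_2\} \iff \omega_1\prec^{*}\omega_2$ (where disjointness of $\{\omega_1\}$ and $\{\omega_2\}$ comes for free from irreflexivity of the strict part $\prec^{*}$). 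One small point worth making explicit: identifying your extracted $\sim$ with $\sim^{*}$ uses totality of $\preceq^{*}$ --- ``neither strict direction holds'' coincides with $\sim^{*}$ only because incomparability is impossible in a total preorder --- so say so in the write-up. Your closing remark is also apt and is where the genuine mathematical content would live: if ``complete'' belief algebras were instead characterized axiomatically (as one might expect in the source paper), transitivity of the extracted relation could not be obtained by chaining (A3) and (A4) on singletons, and the proof would require real work; with the paper's definition-by-construction, that difficulty is sidestepped, and your round-trip argument (which in passing establishes uniqueness of the generating preorder, the fact underlying Corollary~\ref{cor:cba-to}) completes the bijection cleanly.
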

  Each CBA is totally decided by preferences on the sets consisting of a single world, which is implied straightforwardly by Lemma~\ref{pro:cba-to}.
  \begin{corollary}\label{cor:cba-to}
    Suppose $G=(2^W,\gg)$ and $G'=(2^W,\gg')$ are CBAs, then:
    \begin{itemize}
      \item $\Gen(\Omega)=G$, where $\Omega=\{(\{\omega\},\{\omega'\})\mid \{\omega\}\gg\{\omega'\}\}$.
      \item If $G$ and $G'$ have the same preferences on single world sets,
      i.e., for any $\omega,\omega'\in W$, $(\{\omega\},\{\omega'\})\in G$ iff $(\{\omega\},\{\omega'\})\in G'$, 
      then $G=G'$.
    \end{itemize}
  \end{corollary}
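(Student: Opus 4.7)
The plan is to derive both parts of the corollary directly from Lemma~\ref{pro:cba-to}, exploiting the fact that a CBA is fully determined by its underlying total preorder, which is in turn determined by comparisons of singletons.

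For the first bullet, I would show the two inclusions $\Gen(\Omega)\subseteq G$ and $G\subseteq\Gen(\Omega)$ separately. The inclusion $\Gen(\Omega)\subseteq G$ is immediate: by construction $\Omega\subseteq G$, and since $G$ is a belief algebra it is closed under (A1), (A3) and (A4), so it contains the smallest such closure. For the reverse inclusion, pick any $(U,V)\in G$; since $G$ is a CBA generated by a total preorder $\preceq$, Lemma~\ref{pro:cba-to} gives $U\cap V=\varnothing$ and some $\omega_1\in U$ with $\omega_1\prec\omega_2$ for every $\omega_2\in V$. In particular $(\{\omega_1\},\{\omega_2\})\in\Omega$ for each $\omega_2\in V$. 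I would then build $(\{\omega_1\},V)$ by induction on $|V|$: at the inductive step, use (A3) to enlarge the left-hand side of each of $(\{\omega_1\},V\setminus\{\omega_k\})$ and $(\{\omega_1\},\{\omega_k\})$ to $\{\omega_1\}\cup V$ minus the corresponding right-hand side, then apply (A4) with the common carrier $\{\omega_1\}\cup V$ to obtain $(\{\omega_1\},V)$. Finally, one more application of (A3) enlarges the left-hand side from $\{\omega_1\}$ to $U$, yielding $(U,V)\in\Gen(\Omega)$.

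The second bullet follows at once from the first. If $G$ and $G'$ are CBAs that agree on all singleton comparisons, then the sets $\Omega$ and $\Omega'$ extracted from them as in the first bullet coincide. By the first bullet $G=\Gen(\Omega)=\Gen(\Omega')=G'$.

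The only non-routine part is the inductive construction of $(\{\omega_1\},V)$ inside $\Gen(\Omega)$ using only (A3) and (A4); once the base case and the single inductive step are checked, both parts of the corollary follow cleanly. I expect this closure argument to be the main (and essentially the only) obstacle; everything else is bookkeeping on top of Lemma~\ref{pro:cba-to}.
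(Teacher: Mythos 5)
Your proof is correct and takes the same route the paper intends: the paper gives no explicit proof, stating the corollary is ``implied straightforwardly by Lemma~\ref{pro:cba-to}'', and your argument is exactly that derivation, with the inductive (A3)/(A4) construction of $(\{\omega_1\},V)$ (via the common carrier $\{\omega_1\}\cup V$) correctly supplying the one nontrivial detail the paper leaves implicit. The only thing to make explicit is the base case of your induction: for $V=\varnothing$ the pair $(U,\varnothing)$ lies in $\Gen(\Omega)$ because $Tr(W)\subseteq\Gen(\Omega)$ by (A1), after which everything checks out.
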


  The following proposition shows how to construct new belief algebras from existing belief algebras.
  
  \begin{proposition}\label{thm:cap}
  Suppose $G=(2^W,\gg)$ and  $G'=(2^W,\gg')$ are belief algebras. Then
  $G\cap G'=(2^W,\gg\cap\gg')$ is a belief algebra.
  \end{proposition}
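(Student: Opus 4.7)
The plan is to verify each of the five axioms (A0)--(A4) for the relation $\gg \cap \gg'$ on $2^W$, using that the analogous axiom already holds individually for both $\gg$ and $\gg'$. This is essentially a structural check: intersection of relations preserves every one of the defining closure properties of a belief algebra, since all of them are universally quantified Horn-like conditions of the form ``if certain pairs are in the relation, then a specified pair is also in the relation,'' or of the form ``a certain pair is not in the relation.''

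First I would handle (A0), (A1), (A2) quickly. For (A0), any pair in $\gg \cap \gg'$ lies in $\gg \subseteq R_W$. For (A1), if $U \neq \varnothing$, then $U \gg \varnothing$ and $U \gg' \varnothing$, hence $(U,\varnothing) \in \gg \cap \gg'$; conversely $\varnothing \not\gg \varnothing$ handles the other direction. For (A2), if $(U,V) \in \gg \cap \gg'$, then in particular $(U,V) \in \gg$, so by (A2) for $G$ we get $(V,U) \notin \gg$, hence $(V,U) \notin \gg \cap \gg'$.

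Next I would verify (A3) and (A4), which are the substantive closure rules. For (A3), assume $U_1 \supseteq U$, $V \supseteq V_1$, $U_1 \cap V_1 = \varnothing$, and $(U,V) \in \gg \cap \gg'$. Applying (A3) inside $G$ yields $(U_1,V_1) \in \gg$, and applying (A3) inside $G'$ yields $(U_1,V_1) \in \gg'$; intersecting gives the conclusion. For (A4), the hypotheses $U = U_1 \cup V_1 = U_2 \cup V_2$, $(U_1,V_1) \in \gg \cap \gg'$, and $(U_2,V_2) \in \gg \cap \gg'$ directly supply the premises of (A4) in both $G$ and $G'$ simultaneously, so $(U_1 \cap U_2,\, V_1 \cup V_2)$ lies in each of $\gg, \gg'$, hence in their intersection.

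There is no genuine obstacle here; the proof amounts to observing that each axiom has the logical shape needed for intersection to respect it. The only thing worth flagging is that axiom (A4) additionally requires $U_1 \cap U_2$ to be disjoint from $V_1 \cup V_2$ in order for the resulting pair to live in $R_W$, but this disjointness is already forced by the premises via (A0) applied to $G$ (or $G'$), and is therefore inherited automatically. After all five checks, $(2^W, \gg \cap \gg')$ satisfies Definition~\ref{dfn:ba} and is thus a belief algebra, establishing the proposition.
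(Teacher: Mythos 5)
Your proof is correct and follows essentially the same approach as the paper: checking that each of the axioms (A0)--(A4) passes through the intersection of relations, since each has a logical form preserved by intersection. The paper's own proof spells out only (A3) and notes the remaining axioms are ``similar and/or simpler,'' so your version is just a more complete write-up of the same argument (your closing remark that disjointness for the (A4) conclusion is inherited via (A0) is also correct).
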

  \begin{proof}
    We only need to show that $\gg\cap\gg'$ satisfies (A0)-(A4). We take (A3) as an example, as the rest can be proven in similar and/or simpler fashion. If $(U,V)\in \gg\cap\gg'$, $U_1\supseteq U , V\supseteq V_1$ and $U_1\cap V_1=\varnothing$, then $U_1\gg V_1$ and $U_1\gg V_1$ since $G_1,G_2$ satisfy (A3).
    Hence $(U,V)\in \gg\cap\gg'$. This means that $G\cap G'$ satisfies (A3).
    \end{proof}
  \begin{corollary}\label{cor-subalgebra}
  Suppose $G=(2^W,\gg)$ is a belief algebra and $\Omega\subseteq\gg$,  and $\mathcal{A}$ is the set of all the belief algebras that contains $\Omega$. Then $\Gen(\Omega)$ is a belief algebra, and $\Gen(\Omega)=\bigcap\mathcal{A}$.
  \end{corollary}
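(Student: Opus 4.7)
The plan is to split the corollary into its two claims and handle them in order, leveraging the hypothesis that $\Omega$ sits inside some given belief algebra $\gg$. First I would verify that $\Gen(\Omega)$ is a belief algebra by checking (A0)--(A4). Axioms (A0), (A1), (A3), (A4) are built into the definition of $\Gen(\Omega)$ (by Definition~\ref{dfn:gen}, together with the fact that $\Gen(\Omega) \subseteq R_W$ by construction). The only nontrivial axiom to check is (A2). Here I would exploit the ambient belief algebra $G$: because $\gg$ satisfies (A1), (A3), (A4) and contains $\Omega$, and because $\Gen(\Omega)$ is by definition the \emph{smallest} subset of $R_W$ containing $\Omega$ and closed under these rules, minimality yields $\Gen(\Omega) \subseteq \gg$. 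Since $\gg$ satisfies (A2), the same must hold of any subset of $\gg$, so $\Gen(\Omega)$ inherits (A2). This closes the first half.

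For the equality $\Gen(\Omega) = \bigcap \mathcal{A}$, I would argue by double inclusion. Note first that $\mathcal{A}$ is nonempty, since the given $G$ belongs to it. To see $\bigcap \mathcal{A}$ is itself a belief algebra, I would extend Proposition~\ref{thm:cap} from binary intersections to arbitrary ones: each of (A0), (A2), (A3), (A4) is a universally-quantified implication that is preserved under arbitrary intersections, and (A1) is handled by observing that every member of $\mathcal{A}$ contains $Tr(W)$, so $Tr(W) \subseteq \bigcap \mathcal{A}$, giving the nontrivial direction of (A1). Since $\bigcap \mathcal{A}$ is then a belief algebra containing $\Omega$ and closed under (A1), (A3), (A4), minimality of $\Gen(\Omega)$ gives $\Gen(\Omega) \subseteq \bigcap \mathcal{A}$. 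Conversely, by the first half $\Gen(\Omega)$ is itself a belief algebra containing $\Omega$, hence $\Gen(\Omega) \in \mathcal{A}$, which forces $\bigcap \mathcal{A} \subseteq \Gen(\Omega)$.

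The only subtle step is establishing (A2) for $\Gen(\Omega)$: a direct inductive argument on the closure process would have to track how (A3) and (A4) interact to rule out cycles $(U,V)$ and $(V,U)$ simultaneously appearing, which is combinatorially delicate. The clean way around this obstacle is precisely the hypothesis of the corollary, namely that $\Omega$ lives inside some already-consistent belief algebra $\gg$, which lets us sidestep the closure-theoretic argument by a simple subset inclusion. Everything else reduces to repeating the proof of Proposition~\ref{thm:cap} axiom-by-axiom and invoking the definition of $\Gen(\cdot)$.
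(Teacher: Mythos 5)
Your proposal is correct and follows essentially the same route as the paper's proof: minimality of $\Gen(\Omega)$ inside the ambient $G$ yields $\Gen(\Omega)\subseteq\,\gg$ and hence (A2), making $\Gen(\Omega)$ the smallest belief algebra containing $\Omega$, after which the inclusion $\bigcap\mathcal{A}\subseteq\Gen(\Omega)$ (via $\Gen(\Omega)\in\mathcal{A}$) and the reverse inclusion (via $\bigcap\mathcal{A}$ being a belief algebra containing $\Omega$) go exactly as you argue. The only, harmless, divergence is in justifying that $\bigcap\mathcal{A}$ is a belief algebra: the paper iterates the binary intersection of Proposition~\ref{thm:cap}, using that $\mathcal{A}$ is finite because $W$ is finite, whereas you verify closure of the axioms under arbitrary intersections directly, a marginally more general variant that avoids the finiteness appeal.
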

  \begin{proof}
  By Definition~\ref{dfn:gen} and $\Omega\subseteq\  \gg$,  we have $\Gen(\Omega)\subseteq G$. Then $\Gen(\Omega)$ satisfies (A2), and $\Gen(\Omega)$ is the smallest belief algebra containing $\Omega$.
  Thus, $\Gen(\Omega) \in \mathcal{A}$ and $\bigcap \mathcal{A} \subseteq \Gen(\Omega)$. 
  On the other hand,
  by Proposition~\ref{thm:cap}, we know that any finite intersection of belief algebras is also a belief algebra, and $\mathcal{A}$ contains only finite number of belief algebras because $W$ is a finite set, so $\bigcap\mathcal{A}$ is a belief algebra containing $\Omega$. Then $\Gen(\Omega) \subseteq \bigcap\mathcal{A}$ and we have $\Gen(\Omega)=\bigcap\mathcal{A}$.
  \end{proof}
  
  The following result unveils the structure of $\gg$, which consists of levels of subsets of worlds.  
  \begin{lemma}[\cite{meng2015belief}]\label{prop:chain}
  Suppose $(2^W,\gg)$ is a belief algebra. Then there is a unique chain, called \emph{backbone}, $\Delta=\{U_1\gg U_2\gg U_3\gg\cdots\gg U_n\}$, such that:
  \begin{itemize}
  \item[]\rm (Ch1)\quad $\{U_i\}_{i=1}^n$ is a partition of $W$, i.e., $\{U_i\}_{i=1}^n$ consists of pairwise disjoint nonempty subsets of $W$ and $\bigcup_{i=1}^{n} U_i=W$.
  \item[]\rm (Ch2)\quad For each $U_i$, if $V_1, V_2$ are two disjoint nonempty subsets of $U_i$, then $V_1$ and $V_2$ are incomparable in $\gg$, i.e., $(V_1, V_2)\notin \gg$ and $(V_2, V_1)\notin \gg$.
  \end{itemize}
  \end{lemma}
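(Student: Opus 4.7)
My plan is to handle existence and uniqueness separately, in both cases peeling off a canonical ``top'' layer $U_1$ and recursing on the restriction of $\gg$ to $2^{W \setminus U_1}$, which inherits axioms (A0)--(A4) and is therefore again a belief algebra on a strictly smaller ground set; finiteness of $W$ ensures termination.

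For existence, I would define $U_1 = \{\omega \in W : W \setminus \{\omega\} \not\gg \{\omega\}\}$. Nonemptiness of $U_1$ follows by contradiction: if every singleton were dominated by its complement, iterating (A4) on the relations $W \setminus \{\omega\} \gg \{\omega\}$ as $\omega$ ranges over $W$ (each application exploits the common decomposition $U = W$) collapses to $\emptyset \gg W$, contradicting $W \gg \emptyset$ (from (A1)) together with (A2). Restricting the same (A4)-iteration to $\omega \in W \setminus U_1$ yields $U_1 \gg W \setminus U_1$ whenever the complement is nonempty. For (Ch2), any hypothetical $V_1 \gg V_2$ with disjoint nonempty $V_1, V_2 \subseteq U_1$ would, by a single (A3) application (extend $V_1$ to $W \setminus \{\omega\}$ and shrink $V_2$ to $\{\omega\}$ for any chosen $\omega \in V_2$), deliver $W \setminus \{\omega\} \gg \{\omega\}$ with $\omega \in U_1$, a contradiction. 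Finally (A3) narrows $U_i \gg W_i \setminus U_i$ to $U_i \gg U_{i+1}$, assembling the chain.

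For uniqueness, let $\Delta = \{U_1 \gg \cdots \gg U_n\}$ and $\Delta' = \{U_1' \gg \cdots \gg U_m'\}$ both satisfy (Ch1)+(Ch2). The key auxiliary fact is $U_1 \gg W \setminus U_1$ (and likewise for $U_1'$), which is not immediate because (A3) alone does not supply transitivity across the chain; I would prove $U_1 \gg U_2 \cup \cdots \cup U_k$ by induction on $k$, the step combining the hypothesis with $U_1 \cup \cdots \cup U_k \gg U_{k+1}$ (from $U_k \gg U_{k+1}$ via (A3)) through an (A4) application on the two decompositions $(U_1 \cup U_{k+1}) \cup (U_2 \cup \cdots \cup U_k) = (U_1 \cup \cdots \cup U_k) \cup U_{k+1}$ of $U_1 \cup \cdots \cup U_{k+1}$. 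A single further (A4) step on $W = U_1 \cup (W \setminus U_1) = U_1' \cup (W \setminus U_1')$ then produces $U_1 \cap U_1' \gg (W \setminus U_1) \cup (W \setminus U_1')$. Assuming $U_1 \neq U_1'$, either $U_1 \cap U_1' = \emptyset$ (forcing $U_1 \gg U_1'$ and $U_1' \gg U_1$ by (A3), against (A2)) or $U_1 \cap U_1' \neq \emptyset$ (yielding $U_1 \cap U_1' \gg U_1 \setminus U_1'$ by (A3), against (Ch2) for $U_1$). Degenerate cases $n=1$ or $m=1$ are handled separately: (Ch2) for $U_1 = W$ forbids every nontrivial $\gg$-relation, which forces $\Delta'$ to also be trivial. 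Recursion on $W \setminus U_1 = W \setminus U_1'$ finishes the argument.

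The main obstacle is the uniqueness direction, specifically the inductive derivation of $U_1 \gg W \setminus U_1$ from the chain and the subsequent (A4) merge of the two top-level relations. Because (A3) supplies only unilateral extension/shrinking, the ``obvious'' transitive closure across the chain is not free: the inductive step requires carefully engineered overlapping decompositions of $U_1 \cup \cdots \cup U_{k+1}$ so that (A4) contracts the left-hand side back to exactly $U_1$. The subsequent case split and the precisely placed invocation of (Ch2) at the subset $U_1 \setminus U_1'$ of $U_1$ are also where most of the bookkeeping lives.
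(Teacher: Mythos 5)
Your argument is sound, but there is nothing in the paper to compare it against line by line: the lemma is imported from the cited work of Meng et al.\ and the paper itself contains no proof of it. The closest in-paper material is the backbone computation sketched in Section~7, which uses the same layer-peeling recursion as you do but takes $U_1=\bigcap\{U\subseteq W\mid U\gg W\setminus U\}$ as the top layer; your definition $U_1=\{\omega\in W\mid W\setminus\{\omega\}\not\gg\{\omega\}\}$ is equivalent to it (if $U\gg W\setminus U$ and $\omega\notin U$, then (A3) yields $W\setminus\{\omega\}\gg\{\omega\}$, so your $U_1$ is contained in every dominating $U$; conversely your (A4)-iteration shows your $U_1$ itself dominates its complement), so your existence half matches the construction the paper gestures at. The uniqueness half, which the paper omits entirely, is the genuinely delicate part, and your treatment is correct: deriving $U_1\gg U_2\cup\cdots\cup U_k$ from only the consecutive links, by feeding the two overlapping decompositions of $U_1\cup\cdots\cup U_{k+1}$ into (A4), is exactly what is needed, since (A3) alone provides no transitivity across the chain. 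Two small points to tighten, neither a gap in substance: (i) in the case $U_1\neq U_1'$ with $U_1\cap U_1'\neq\varnothing$ you shrink to $U_1\setminus U_1'$ and invoke (Ch2) for $U_1$, but $U_1\setminus U_1'$ may be empty (when $U_1\subsetneq U_1'$), in which case you must instead shrink to $U_1'\setminus U_1$ and invoke (Ch2) for $U_1'$ --- purely symmetric, but it should be said; (ii) your recursion silently uses that for $V_1,V_2\subseteq W\setminus U_1$ the pair $(V_1,V_2)$ lies in $\gg$ iff it lies in the restriction of $\gg$ to $2^{W\setminus U_1}$, so that (Ch2) established in the restricted algebra transfers back to the original relation; this is immediate from the definition of restriction but is load-bearing and worth one sentence.
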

   The above lemma shows that the backbone is the core structure of a belief algebra. (Ch2) shows that for each $U_i$ in a backbone, any subsets of $U_i$ can not be compared with one another. In terms of belief preferences, this means that an agent has no preference on subsets of $U_i$. 
   Another important concept is the \emph{support} of a subset of $W$ w.r.t. some backbone.
   \begin{definition}
    Let $\Delta=\{U_1\gg U_2\gg U_3\gg\cdots\gg
    U_n\}$ be the backbone of a belief algebra. 
    The \emph{support} of  a nonempty set $V\subseteq W$ w.r.t. $\Delta$ is defined as
    $I(V)=U_i$, where $V \cap U_i \ne \emptyset$ and $\forall j \ne i$, if $V \cap U_j \ne \emptyset, U_i \gg U_j$. That is, $I(V)$ is the largest $U_i$ under $\gg$ in the backbone such that $V \cap U_i \ne \emptyset$.
  \end{definition}
  \begin{figure}[tb]
    \centering
    \includegraphics[width=0.6\linewidth]{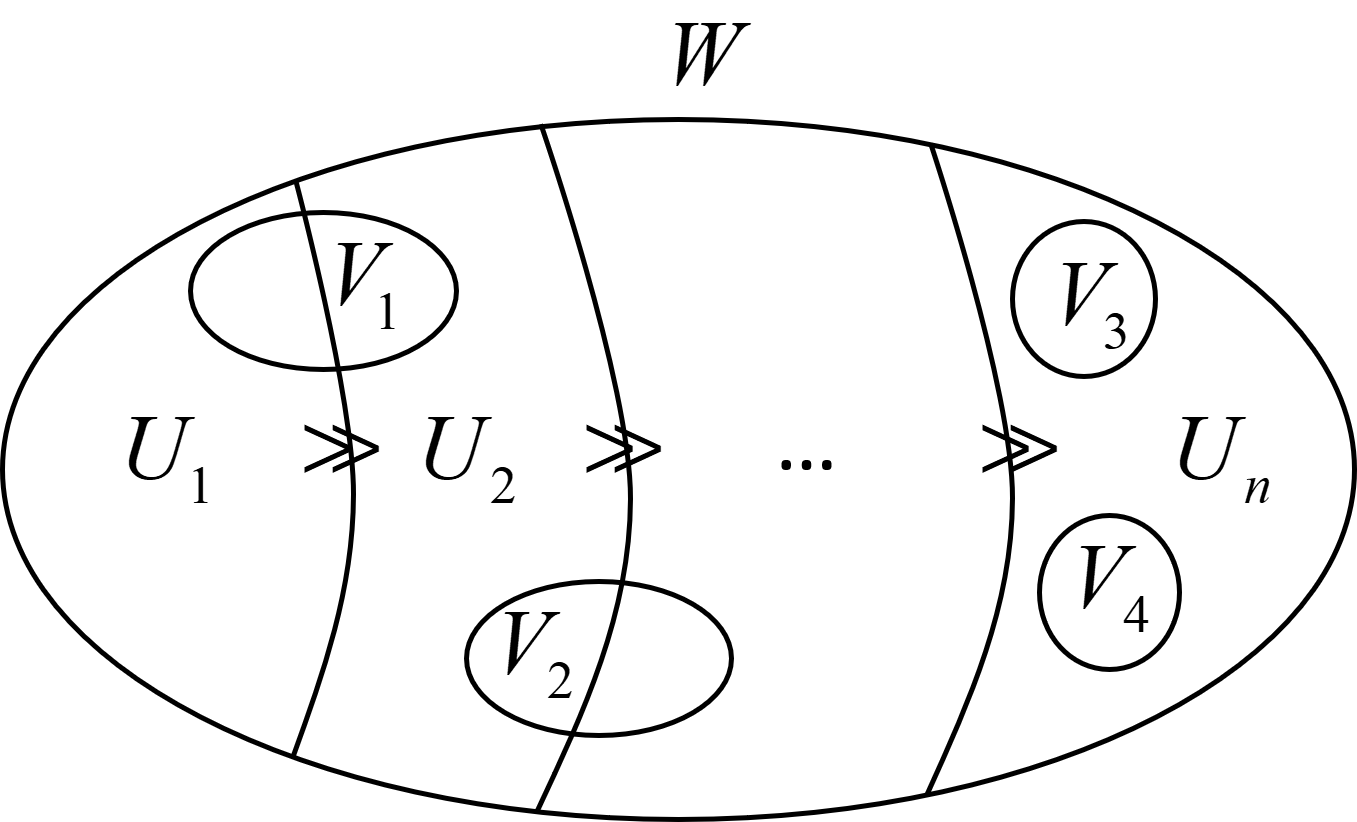}
    \caption{Illustration of the structure of the backbone $\{U_1 \gg U_2 \gg \cdots \gg U_n\}$ and supports of a belief algebra. The backbone forms a partition of $W$, and according to (Ch2), $V_3 \not\gg V_4$ and $V_4 \not\gg V_3$, because $V_3 \cap V_4 = \emptyset$ and $V_3,V_4 \subseteq U_n$. Here,
    the supports w.r.t. the backbone are $I(V_1) = U_1$, $I(V_2) = U_2$, and
    $I(V_3)=I(V_4) = U_n$.}\label{fig:bs}
  \end{figure}
  Figure~\ref{fig:bs} gives an illustration of the above definitions. 
   Moreover, the preferences of an agent are ``consistent'' with the backbone in the sense of property (1) in Lemma~\ref{lem:sup} that follows.
  \begin{lemma}[\cite{meng2015belief}]\label{lem:sup}
  The relations between belief algebras and their backbones are as follows:
  \begin{enumerate}
  \item[(1)] Suppose $G=(2^W,\gg)$ is a belief algebra, then $U\gg V$ only if $I(U)\gg I(V)$.
  \item[(2)] Suppose $G=(2^W,\gg)$ is a complete belief algebra, then $U\gg V$ if and only if $I(U)\gg I(V)$.
  \item[(3)] Suppose $G=(2^W,\gg)$ is a belief algebra, and $\Delta=\{U_1\gg U_2\gg U_3\gg\cdots\gg U_n\}$ is its backbone. Then there is a unique complete belief algebra \mbox{$G'=(2^W,\gg')$} containing $G$ and having the same backbone with $G$. Moreover, $\gg'=\{(U,V)\in R_W\mid U,V\neq\emptyset, I(U)\gg I(V)\}\cup Tr(W)$.
  \end{enumerate}
  \end{lemma}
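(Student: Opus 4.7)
The plan is to prove the three parts in order, with part~(1) as the technical core and parts~(2) and (3) following with relatively short arguments.

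For part~(1), I would argue by contradiction. Assume $U\gg V$ with $I(U)=U_p$ and $I(V)=U_q$, yet $U_p\not\gg U_q$; since the backbone is a total chain, this forces $p\geq q$. The cases $p=q$ and $p>q$ follow the same template. First, from the backbone inequalities $U_q\gg U_j$ for $j>q$, via (A3) and iterated (A4), I would derive an auxiliary inequality: $U_p\gg U\setminus U_p$ in the case $p=q$ (when $U\setminus U_p\neq\varnothing$; otherwise $U\subseteq U_p$ and a single (A3)-shrinking of $V$ to $V\cap U_p$ already contradicts (Ch2)), or $U_q\gg U$ in the case $p>q$ (here $U\cap U_q=\varnothing$ since $U$ lives in levels with indices $\geq p>q$). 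Second, starting from $U\gg V$, I would shrink the loser to $V\cap U_p$ (respectively $V\cap U_q$) by (A3) and enlarge the winner to $U\cup(U_p\setminus V)$ (resp.\ $U\cup(U_q\setminus V)$). Applying (A4) to these two splits of the common union $U\cup U_p$ (resp.\ $U\cup U_q$) intersects the two winners down to $U_p\setminus V$ (resp.\ $U_q\setminus V$); one further (A3)-shrink of the loser then produces a $\gg$-relation between two disjoint nonempty subsets of a single backbone level, contradicting (Ch2). The degenerate case $V\cap U_q=U_q$ leaves the intersected winner empty, whence $\varnothing\gg$ a nonempty set, which is impossible by (A1) together with (A2).

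For part~(2), one direction is part~(1); for the converse, given $I(U)=U_p\gg U_q=I(V)$, any $\omega_1\in U\cap U_p$ satisfies $\omega_1\prec\omega_2$ for every $\omega_2\in V$ (which lies in some $U_j$ with $j\geq q>p$) in the total preorder inducing $G$, so $U\gg V$ by Lemma~\ref{pro:cba-to}. For part~(3), let $\preceq$ be the total preorder on $W$ whose indifference classes are the $U_i$, ordered according to the backbone, and let $G'$ be the CBA it induces via Theorem~\ref{thm:cba}. The backbone of $G'$ is $\Delta$ by construction, and part~(2) applied to $G'$ yields the closed-form description $\gg'=\{(U,V)\in R_W\mid U,V\neq\varnothing,\ I(U)\gg I(V)\}\cup Tr(W)$; containment $G\subseteq G'$ is then immediate from part~(1). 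Uniqueness holds because any CBA sharing the backbone $\Delta$ agrees with $G'$ on every pair of singletons $(\{\omega\},\{\omega'\})$ (both sides record precisely whether the two worlds lie in different levels), hence coincides with $G'$ by Corollary~\ref{cor:cba-to}.

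The main obstacle is the collapse step in part~(1). Axiom (A3) only allows enlarging the winner or shrinking the loser, so contracting the winner---which is exactly what the contradiction demands---must go through (A4), and (A4) requires two $\gg$-relations sharing an underlying union. The delicate combinatorial move is selecting the right second split from the backbone itself so that, after the (A4) intersection, the winner lies entirely inside a single backbone level; keeping the bookkeeping straight across the sub-cases $p=q$ and $p>q$ and their edge cases is where most of the care lies.
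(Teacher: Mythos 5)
The paper never proves this lemma: it is imported verbatim from \cite{meng2015belief} (note the citation in the lemma header), so there is no in-paper argument to compare yours against, and your proposal must be judged directly against the axioms (A0)--(A4) and the backbone properties (Ch1)--(Ch2). On that standard it is correct. The collapse step at the heart of part~(1) checks out: in the case $p=q$ your two splits of $U\cup U_p$ are $\bigl(U\cup(U_p\setminus V)\bigr)\cup(V\cap U_p)$ and $U_p\cup(U\setminus U_p)$, and since $U\cap U_p\subseteq U_p\setminus V$ (because $U\cap V=\varnothing$), (A4) indeed contracts the winner to $U_p\setminus V$, which is nonempty as it contains $U\cap U_p$; the final (A3)-shrink yields $U_p\setminus V\gg V\cap U_p$, two disjoint nonempty subsets of $U_p$, contradicting (Ch2). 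The case $p>q$ runs the same way with $U_q$, and you correctly observe that there (unlike when $p=q$, where $U\cap U_p$ keeps the winner nonempty) the winner $U_q\setminus V$ can vanish, and that $\varnothing\gg X$ for nonempty $X$ contradicts (A1) together with (A2). Your auxiliary inequalities also hold as claimed: the pairwise relations $U_q\gg U_j$ for $j>q$ combine, by enlarging winners via (A3) and intersecting via (A4), into $U_q\gg\bigcup_{j>q}U_j$, after which one (A3)-shrink gives $U_p\gg U\setminus U_p$ (resp.\ $U_q\gg U$, using $U\cap U_q=\varnothing$).

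The only step you assert rather than prove is that the backbone of the CBA induced by a total preorder consists exactly of the $\sim$-equivalence classes ordered by $\prec$; you rely on this in part~(2) (``which lies in some $U_j$ with $j\geq q>p$'') and in part~(3) (``by construction''). It is quick---the $\sim$-classes satisfy (Ch1) and (Ch2) for the induced relation, and Lemma~\ref{prop:chain} supplies uniqueness of the backbone---but it should be stated, since parts~(2) and~(3) lean on it. Two cosmetic remarks: the ``iff'' in part~(2) must be read within $R_W$ (nonempty, disjoint $U,V$), which your appeal to Lemma~\ref{pro:cba-to} implicitly supplies; and your uniqueness argument in part~(3) proves something slightly stronger than required---any CBA with backbone $\Delta$ equals $G'$, without using containment of $G$---via agreement on singleton pairs and Corollary~\ref{cor:cba-to}, which is fine.
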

  The above lemma shows that there are different belief algebras that have the same backbone $\Delta$, and there is a unique complete belief algebra with $\Delta$ as its backbone, viz., the largest one containing all the pairs $(U,V)$ s.t. $I(U) \gg I(V)$. Roughly speaking, the backbone of a belief algebra is the ``core'' belief information that reflects the main preferences of the agent. 
  Taking Figure~\ref{fig:bs} as an example, a belief algebra $G$ with $\{U_1\gg U_2\gg U_3\gg\cdots\gg U_n\}$ as its backbone may or may not contain $V_1 \gg V_2$, but if it is CBA, then it must contain $V_1 \gg V_2$, and it should never contain $V_2 \gg V_1$ because $I(V_1)=U_1 \gg U_2= I(V_2)$.
  \begin{definition}\label{def:com}
  Suppose $G=(2^W,\gg)$ is a belief algebra. Then we denote by $\Com(G)$ the complete belief algebra that contains $G$ and has the same backbone as $G$. Suppose $G_1$ and $G_2$ are both belief algebras. Then we write $G_1\leq G_2$ if these two belief algebras have the same backbone and $G_1\subseteq G_2$.
  \end{definition}
\begin{example}\label{eg:dp}
  Let $L$ be a propositional language with two variables $\{b,f\}$ and $W=\{\omega_1  := b\wedge f, \omega_2 := b\wedge \neg f, \omega_3 := \neg b\wedge f, \omega_4 := \neg b\wedge \neg f\}$. Suppose Bob's current belief state is represented as a total preorder $\omega_1\sim\omega_2\prec\omega_3\sim\omega_4$, and the new evidence is a formula $\mu$ such that the worlds that entail $\mu$ are represented as
  $[\mu]=\{\omega_1,\omega_4\}$. 
  In this situation, Bob holds a new preference that $[\mu] \gg [\neg\mu]$.
These belief preferences can be represented using belief algebra. 
Following the notation in Example~\ref{eg:gen}, Bob's current belief information can be represented as: $Gen(\{(1, 3), (1, 4), (2, 3), (2, 4)\}).$ Similarly, the new evidence $\mu$ can be represented as $Gen(\{(14, 23)\})$. Note that $Gen(\{(1, 3), (1, 4), (2, 3), (2, 4)\})$ is a \emph{complete belief algebra} (CBA), while $\Gen(\{(14, 23)\})$ is not a CBA. The backbone of the former is $\{\omega_1, \omega_2\} \gg \{\omega_3, \omega_4\}$, while the backbone of the latter is $\{\omega_1, \omega_4\} \gg \{\omega_2, \omega_3\}$. Moreover, $Com(Gen(\{(14, 23)\}))=Gen(\{(1, 2), (1, 3), (4, 2), (4, 3)\})$.
\end{example}
  Note that $G_1 \leq G_2$ is different from $G_1\subseteq G_2$. It actually means that $G_1$ and $G_2$ contain the same ``core'' belief information, but $G_1$ is ``less informational'' than $G_2$, in the sense that any revision result of $G_1$ should be contained in that of $G_2$, as we will see later in the postulate (RA5). Let $BA_L$ be the set of belief algebras over the worlds of $L$. It is easy to check that $\leq$ as defined above is a partial order on $BA_L$. We also use $BA_L$ to denote the partial order set $(BA_L,\leq)$.
  
  \begin{theorem}\label{thm:lat}
    Let $\Delta=\{U_1\gg U_2\gg U_3\gg\cdots\gg U_n\}$ be a backbone and denote by $BA(\Delta)=\{G\in BA_L\mid \Delta$ is the backbone of $G\}$ the set of belief algebras having $\Delta$ as backbone.
  Suppose $G_1, G_2\in BA(\Delta)$. Then:
  \begin{enumerate}[{(1)}]
  \item $G_1\cap G_2\in BA(\Delta) $.
  \item $\Gen(G_1\cup G_2)$ is a belief algebra, and $\Gen(G_1\cup G_2)\in BA(\Delta)$.
  \end{enumerate}
  \end{theorem}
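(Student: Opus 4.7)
The plan is to exploit Lemma~\ref{lem:sup}(3) at the outset. Since $G_1$ and $G_2$ both lie in $BA(\Delta)$, their completions coincide with the unique complete belief algebra whose backbone is $\Delta$; call this CBA $C$. So both $G_1 \subseteq C$ and $G_2 \subseteq C$, and crucially $C$ is itself closed under (A1), (A3), and (A4). This common ``ceiling'' $C$ will do most of the work for both parts, especially for controlling the antisymmetry axiom (A2).

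For part (1), Proposition~\ref{thm:cap} already gives that $G_1\cap G_2$ is a belief algebra, so the only task is to identify its backbone. By the uniqueness statement in Lemma~\ref{prop:chain}, it suffices to verify (Ch1) and (Ch2) for $\Delta$ with respect to $G_1 \cap G_2$. (Ch1) is immediate because $\Delta$ is already a partition of $W$, and the chain pairs $(U_i, U_{i+1})$ belong to both $\gg_1$ and $\gg_2$ (since $\Delta$ is the backbone of each), hence to the intersection. For (Ch2), pick disjoint nonempty $V_1, V_2 \subseteq U_i$; by (Ch2) applied to $G_1$, neither $(V_1,V_2)$ nor $(V_2,V_1)$ lies in $\gg_1$, so neither lies in $\gg_1 \cap \gg_2$. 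Hence $\Delta$ is the backbone of $G_1 \cap G_2$.

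For part (2), I first verify that $\Gen(G_1 \cup G_2)$ satisfies the belief algebra axioms. (A1), (A3), (A4) are precisely the closure rules used in $\Gen$. (A0) survives because (A3) requires $U_1 \cap V_1 = \varnothing$ explicitly, and in (A4) the identity $V_1 \cup V_2 = U \setminus (U_1 \cap U_2)$ guarantees disjointness of the output. The delicate point is (A2). Here I invoke $C$: since $G_1 \cup G_2 \subseteq C$ and $C$ is closed under (A1), (A3), (A4), minimality of $\Gen$ forces $\Gen(G_1 \cup G_2) \subseteq C$; because $C$ satisfies (A2), so does its subset $\Gen(G_1 \cup G_2)$. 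For the backbone claim I repeat the uniqueness argument: $\Gen(G_1\cup G_2)$ contains $G_1$, hence the chain $U_1 \gg \cdots \gg U_n$, so (Ch1) holds; and since $\Gen(G_1\cup G_2) \subseteq C$, any two disjoint nonempty $V_1,V_2 \subseteq U_i$, being incomparable in $C$ by (Ch2) for $C$'s backbone, remain incomparable in $\Gen(G_1\cup G_2)$, giving (Ch2).

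The only genuine obstacle is establishing (A2) for the generated relation, because the closure rules themselves say nothing about antisymmetry and a brute-force induction on generation steps would be awkward. The sandwich $G_1 \cup G_2 \subseteq \Gen(G_1 \cup G_2) \subseteq C$, made available by Lemma~\ref{lem:sup}(3), lets the antisymmetry of $C$ descend for free, and the same sandwich simultaneously pins down the backbone.
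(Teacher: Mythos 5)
Your proof is correct. For context: the paper states Theorem~\ref{thm:lat} without any proof (it is followed immediately by a remark), so there is no official argument to compare against; judged on its own, your argument is sound and is assembled entirely from the paper's existing toolkit. The key observation is exactly right: by the explicit formula in Lemma~\ref{lem:sup}(3), the completion $\gg'=\{(U,V)\in R_W\mid U,V\neq\emptyset,\ I(U)\gg I(V)\}\cup Tr(W)$ depends only on the backbone, so $\Com(G_1)=\Com(G_2)=C$, and the sandwich $G_1\cup G_2\subseteq \Gen(G_1\cup G_2)\subseteq C$ is what makes (A2) --- the one axiom not preserved by the closure rules --- descend to the generated relation. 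In fact, this step is precisely Corollary~\ref{cor-subalgebra} applied with ambient belief algebra $C$ and $\Omega=G_1\cup G_2$, so you could simply cite it instead of re-deriving the minimality argument; likewise, part (1) correctly leans on Proposition~\ref{thm:cap} and only needs the backbone identification. Two minor points of care: (i) for $\Delta$ to qualify as \emph{the} backbone via the uniqueness clause of Lemma~\ref{prop:chain}, you should check that \emph{all} pairs $(U_i,U_j)$ with $i<j$ lie in the relation, not only the consecutive pairs $(U_i,U_{i+1})$, since the support function presupposes the blocks are pairwise comparable; your containment arguments cover this verbatim (every such pair lies in both $\gg_1$ and $\gg_2$, hence in the intersection, and lies in $G_1\subseteq\Gen(G_1\cup G_2)$ for part (2)), so this is a wording fix, not a gap. (ii) Your aside on (A0) is harmless but redundant: Definition~\ref{dfn:gen} already defines $\Gen(\Omega)$ as a subset of $R_W$, so (A0) holds by construction.
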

  \begin{remark}
  Theorem~\ref{thm:lat} shows that $BA(\Delta)$ is a lattice that contains the smallest element $\Gen(\Delta)$ and the largest element $\Com(\Gen(\Delta))$. Here, a lattice is a set equipped with a partial order such that every two elements have a unique supremum (also called the least upper bound) and a unique infimum (also called the greatest lower bound).  Furthermore, $BA_L$ is divided into disjoint parts by the backbone, and each part is a lattice under $\leq$ (see Figure~\ref{fig:bal}). 
  \end{remark}

  \begin{figure}[tb]
    \centering
    \includegraphics[width=0.7\linewidth]{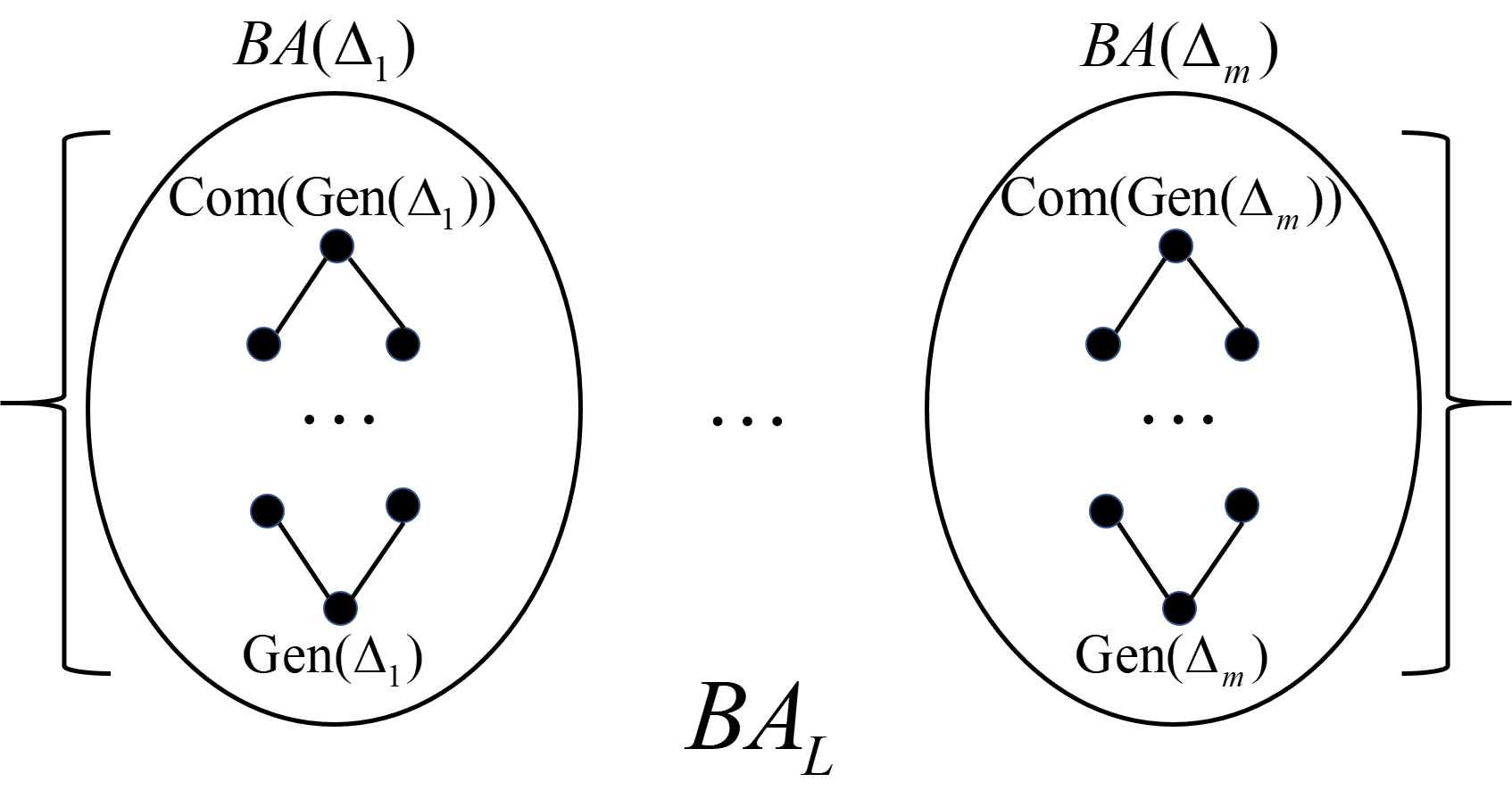}
    \caption{Illustration of the structure of $BA_L$.}\label{fig:bal}
  \end{figure}
  \begin{example}
  Suppose $W=\{\omega_1, \omega_2, \omega_3, \omega_4\}$. Let $G_1=\{(123,4)\}\cup Tr(W)$,
  $G_2=\{(123, 4), (12, 4)\}\cup Tr(W)$, $G_3=\{(123, 4), (13,4)\}\cup Tr(W)$, $G_4=\{(123, 4), (12,4), (13,4), (23,4), (1, 4), (2, 4), (3, 4)\}\cup Tr(W)$. Then $G_1, G_2, G_3, G_4$ are all belief algebras, and they all have the same backbone $\Delta=\{\omega_1, \omega_2, \omega_3\}\gg\{\omega_4\}$. $G_4$ is a complete belief algebra, and $G_1\leq G_2\leq G_4$, $G_1\leq G_3\leq G_4$, but $G_2, G_3$ can not be compared by $\leq$. Furthermore, $G_1=\Gen(\Delta)$ is the smallest element in $BA(\Delta)$, and $G_4=\Com(G_1)$ is the biggest one.
  \end{example}

  \section{Special Case: Revising CBA with CBA}\label{sec:rtot}
Before discussing the general case of revision with belief algebras, we first consider a simplified scenario where the agent's current belief, the new evidence, and the revision result are all represented as \emph{complete belief algebras} (CBAs; see Definition~\ref{dfn:cba}). This process is equivalent to revising an existing total preorder $\preceq_1$ by a new total preorder $\preceq_2$ to obtain a revised total preorder $\preceq_3$.

Inspired by existing research (e.g., \cite{alchourron1985logic,darwiche1997logic,ma2015a}), we propose the following revision rules for this setting. The revision of a total preorder by another total preorder, denoted as $\preceq_1 \circ \preceq_2 = \preceq_3$, has been studied in~\cite{benferhat2000iterated} and is known as a \emph{minimal-model preserving operator}. This operator can be equivalently characterized by the following postulates:
\begin{enumerate}
  \item[](RE1) $\preceq_3$ is a total preorder.
  \item[](RE2) $\prec_2 \subseteq \prec_3$, where $\prec_i=\{(x,y) \mid x \preceq_i y \wedge y\not\preceq_i x\}$.
  \item[](RE3) If $\omega \sim_2 \omega'$, then: $\omega \prec_3 \omega'$ if and only if $\omega \prec_1 \omega'$.
\end{enumerate}
The postulate (RE1) is natural by underlying setting. The postulate (RE2) requires that the new belief information (corresponding to the strict part of the new total preorder) must be fully incorporated into the revision result. Finally, (RE3) ensures that for worlds with equal (non)preference under $\preceq_2$, the preference relation from $\preceq_1$ is preserved in $\preceq_3$. This allows the revision process to retain more of the original belief information while incorporating the new evidence.
    \begin{example}~\label{eg:t-t}
      Following Example~\ref{eg:dp}, Bob's current belief is  $\preceq_1=\{\omega_1\sim\omega_2\prec\omega_3\sim \omega_4\}$ which has $\{(\omega_1 \prec \omega_3),(\omega_1 \prec \omega_4),(\omega_2 \prec \omega_3),(\omega_2 \prec \omega_4)\}$ as its strict part.
    Now, instead of considering revision by a formula, we consider revision by another total preorder $\preceq_2=\{\omega_1\sim\omega_2\sim\omega_3\sim\omega_4\}$. 
    Then $\preceq_2$ 
    has an empty strict part, which means that $\preceq_2$ cannot lead to any new preference. 
    In this sense, all the strict orderings of $\preceq_1$ can be kept into the revision result. It is natural that $\preceq_1\circ\preceq_2=\preceq_1$. Suppose instead $\preceq_2=\{\omega_4\prec\omega_3\prec\omega_2\prec\omega_1\}$. Then, in $\preceq_2$, all worlds are ordered by $\prec$. In this sense, no information in $\preceq_1$ is useful and $\preceq_1\circ\preceq_2=\preceq_2$. On the other hand, if  $\preceq_2=\{\omega_2\sim\omega_4\prec\omega_1\sim\omega_3\}$. Then, $\omega_2$ and $\omega_4$ ($\omega_1$ and $\omega_3$
    respectively) can not be distinguished in $\preceq_2$, but $\omega_2\prec\omega_4$ and $\omega_1\prec\omega_3$ are in $\preceq_1$. Following the information in $\preceq_1$, Bob should hold $\omega_2\prec\omega_4$ and $\omega_1\prec\omega_3$ in $\preceq_1\circ\preceq_2$. Then $\preceq_1\circ\preceq_2=\{\omega_2\prec\omega_4\prec\omega_1\prec\omega_3\}$.
   \end{example}

  Interestingly, it can be shown that revision results satisfying the above postulates are unique.
  \begin{theorem}\label{thm:totuni}
  Suppose $\preceq_1$ and $\preceq_2$ are total preorders. Then there is a unique revision operator satisfying (RE1)-(RE3).
  \end{theorem}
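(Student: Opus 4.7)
The plan is to establish uniqueness by showing that the three postulates pin down the value of $\omega \preceq_3 \omega'$ for every pair of worlds, and then to verify existence by exhibiting a concrete lexicographic construction satisfying (RE1)--(RE3).

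For uniqueness, I would fix arbitrary $\omega, \omega' \in W$ and do a case split on their relationship under $\preceq_2$. If $\omega \prec_2 \omega'$, then (RE2) forces $\omega \prec_3 \omega'$; symmetrically for $\omega' \prec_2 \omega$. If $\omega \sim_2 \omega'$, then (RE3) determines $\prec_3$ on this pair directly from $\preceq_1$: either $\omega \prec_1 \omega'$, giving $\omega \prec_3 \omega'$, or $\omega' \prec_1 \omega$, giving $\omega' \prec_3 \omega$, or $\omega \sim_1 \omega'$, in which case (RE3) rules out strict $\prec_3$ in either direction and totality from (RE1) then forces $\omega \sim_3 \omega'$. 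Thus any two operators satisfying (RE1)--(RE3) must agree on every pair, so at most one operator exists.

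For existence, I would define
\begin{equation*}
\omega \preceq_3 \omega' \iff \omega \prec_2 \omega' \,\text{ or }\, (\omega \sim_2 \omega' \,\text{ and }\, \omega \preceq_1 \omega'),
\end{equation*}
which is the lexicographic combination that first uses $\preceq_2$ and breaks ties by $\preceq_1$. Reflexivity is immediate from $\omega \sim_2 \omega$ and $\omega \sim_1 \omega$. Totality follows by case analysis: since $\preceq_2$ is total, either $\omega$ and $\omega'$ are $\prec_2$-comparable (and then $\preceq_3$ inherits the direction) or they are $\sim_2$-equivalent (and then totality of $\preceq_1$ supplies comparability). Transitivity is by a four-case analysis on the two links, each of which collapses either to transitivity of $\preceq_2$ (when at least one link is strict in $\preceq_2$) or to transitivity of both $\sim_2$ and $\preceq_1$ (when both links are $\sim_2$-ties). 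Postulate (RE2) is immediate since $\omega \prec_2 \omega'$ yields $\omega \prec_3 \omega'$ by the first clause, and the second clause cannot yield $\omega' \preceq_3 \omega$; (RE3) is immediate from the second clause.

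The main obstacle is the bookkeeping in the transitivity check, where one must be careful that when combining a $\prec_2$-step with a $\sim_2$-step we still get a strict $\prec_2$ (hence $\prec_3$) outcome, and that tie-breaks by $\preceq_1$ compose correctly. Once transitivity is verified, uniqueness and existence together yield the theorem.
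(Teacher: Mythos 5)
Your proposal is correct and takes essentially the same approach as the paper: both define the identical lexicographic relation (order by $\prec_2$ first, break $\sim_2$-ties using $\preceq_1$) and establish uniqueness by the same case split on $\omega \prec_2 \omega'$, $\omega' \prec_2 \omega$, and $\omega \sim_2 \omega'$ via (RE2), (RE3), and totality from (RE1). The only difference is one of detail, not substance: you spell out the reflexivity/totality/transitivity checks and the verification of the postulates, which the paper compresses into ``it is not difficult to check that $\preceq$ is a total preorder.''
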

  \begin{proof}
    For any $\omega,\omega'\in W$, we define a binary relation $\preceq$ on $W$ as follows.
    \begin{itemize}
    \item $\omega\prec\omega'$ iff $\omega\prec_2\omega'$, or $\omega\sim_2\omega'$ and $\omega\prec_1\omega'$.
    \item $\omega\sim\omega'$ iff $\omega\sim_1\omega'$ and $\omega\sim_2\omega'$.
    \end{itemize}
    It is not difficult to check that $\preceq$ is a total preorder. Next we only need to show that if an operator $\circ$ satisfies (RE1)-(RE3) then the revision result of $\preceq_1\circ\preceq_2$ is exactly $\preceq$. Let $\preceq_3=\preceq_1\circ\preceq_2$. By (RE1), $\preceq_3$ is a total preorder. Since $\preceq_2$ is a total preorder, for any $\omega,\omega'\in W$, exactly one of $\omega\prec_2\omega'$, $\omega'\prec_2\omega$, and $\omega \sim_2\omega$ will happen. If $\omega\prec_2\omega'$ ($\omega'\prec_2\omega$, respectively) then $\omega\prec_3\omega'$ ($\omega'\prec_3\omega$, respectively) by (RE2). If $\omega\sim_2\omega'$ then we have
     $\omega\prec_3\omega'$ iff $\omega\prec_1\omega'$ by (RE3). That is equivalent to say, if $\omega\sim_2\omega'$ and $\omega\prec_1\omega'$, then $\omega\prec_3\omega'$, and if $\omega\sim_2\omega'$ and $\omega\sim_1\omega'$, then $\omega\sim_3\omega'$. In the end, we have $\omega\prec_3\omega'$ iff $\omega\prec\omega'$, and $\omega\sim_3\omega'$ iff $\omega\sim\omega'$. Therefore, $\preceq_3=\preceq$.
    \end{proof}

  \begin{remark}
  However, 
  the above revision framework has the great limitation of not being able to deal with more general cases such as the new evidence or even the current belief state is not a total preorder, which is common in real-world applications, e.g., the agent only holds incomplete belief information. Therefore, it is necessary to consider other solutions without such limitation.
  \end{remark}
   
  \section{Revision with Belief Algebras}\label{sec:rba}
  In this section, we extend our discussion to a more general framework of iterated belief revision based on a \emph{belief algebra}, rather than restricting ourselves to CBAs. This framework allows for a richer representation of belief states and supports more flexible revision processes. For simplicity, we assume that the agent is rational (i.e. both her belief and the new evidence do not contain ``conflicting information''). If not specified otherwise, we always suppose that the agent's current belief state is a belief algebra $G_1$, the new evidence is a belief algebra $G_2$, and the revision result is also a belief algebra denoted by $G_1\bullet G_2$, where $\bullet$ is a revision operator from $BA_L\times BA_L$ to $BA_L$. 

  To give postulates to characterize rational revisions, following literature, e.g., \cite{alchourron1985logic,darwiche1997logic,jin2007iterated}, we first assume that new evidence has a higher preference, that is, belief information of $G_2$ is more believable than $G_1$. Then $G_2$ should be kept in $G_1\bullet G_2$, and we have:
  \begin{enumerate}
  \item[](RA1) $G_2\subseteq G_1\bullet G_2$
  \end{enumerate}

  Since $G_1$ and $G_2$ collectively cover all the agent's belief information, we assume that $G_1 \bullet G_2$ is entirely determined by $G_1$ and $G_2$ and is generated by some subset of $G_1 \cup G_2$. This leads to the following postulate:  
  \begin{enumerate}
  \item[](RA2) There is an $\Omega\subseteq G_1\cup G_2$ s.t. $G_1\bullet G_2=\Gen(\Omega)$.
  \end{enumerate}
  (RA2) requires that the revision result cannot be generated with information outside $G_1 \cup G_2$, and if $G_1$ and $G_2$ have conflicting information, one should choose a consistent subset in order to generate a belief algebra.

  Notice that a complete belief algebra is equivalent to a total preorder on worlds. Revising a complete belief algebra $G_1$ by another complete belief algebra $G_2$ is equivalent to revising a total preorder by another total preorder. 
  The following postulates (RA3) and (RA4) are thus inspired by (RE1) and (RE3) for revising total preorders in Section~\ref{sec:rtot}, respectively.
  \begin{enumerate}
  \item[](RA3) Suppose $G_1$ and $G_2$ are complete belief algebras. Then $ G_1\bullet G_2$ is also a complete belief algebra.
  \item[](RA4) Suppose $G_1=(2^W,\gg_1)$ and $G_2=(2^W,\gg_2)$ are complete belief algebras,  and $I_2(\{\omega\})=I_2(\{\omega'\})$ in $G_2$. Then $(\{\omega\},\{\omega'\})\in G_1\bullet G_2$ iff $\{\omega\} \gg_1 \{\omega'\}$. 
  \end{enumerate}

  \begin{theorem}\label{thm:rcba}
    Suppose $\bullet$ satisfies (RA1)--(RA4), and $G_1=(2^W,\gg_1)$ and $G_2=(2^W,\gg_2)$ are complete belief algebras. $\Lambda(G_1,G_2)=\{(\{\omega\},\{\omega'\})\mid \{\omega\}\gg_1\{\omega'\},I_2(\{\omega\})=I_2(\{\omega'\})\}$.
    Then $G_1\bullet G_2=\Gen(\Lambda(G_1,G_2)\cup G_2)$, and the result of revising $G_1$ by $G_2$ is unique.
  \end{theorem}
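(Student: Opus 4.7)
The plan is to exploit (RA3): since $G_1$ and $G_2$ are CBAs, the output $G_3 := G_1\bullet G_2$ must also be a CBA, and by Corollary~\ref{cor:cba-to} it is entirely determined by its set of singleton-vs-singleton preferences $\Omega^{*} := \{(\{\omega\},\{\omega'\}) \mid \{\omega\}\gg_3\{\omega'\}\}$. So I would first pin down $\Omega^{*}$ pointwise, use that to get uniqueness, and then massage the resulting expression $\Gen(\Omega^{*})$ into $\Gen(\Lambda(G_1,G_2) \cup G_2)$.

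For the pointwise step, fix distinct $\omega, \omega' \in W$ and split into two cases based on $G_2$'s backbone. If $I_2(\{\omega\}) = I_2(\{\omega'\})$, then (RA4) applies directly and tells us $(\{\omega\},\{\omega'\}) \in G_3$ iff $\{\omega\}\gg_1\{\omega'\}$, which is precisely the defining condition of $\Lambda(G_1,G_2)$. If $I_2(\{\omega\}) \neq I_2(\{\omega'\})$, then because $G_2$ is a CBA, Lemma~\ref{lem:sup}(2) translates the backbone comparison to the singleton level: exactly one of $\{\omega\}\gg_2\{\omega'\}$ or $\{\omega'\}\gg_2\{\omega\}$ holds. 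Whichever does is in $G_3$ by (RA1), and its converse is blocked by axiom (A2) inside the belief algebra $G_3$. Combining the two cases, $\Omega^{*} = \Lambda(G_1,G_2) \cup \bigl\{(\{\omega\},\{\omega'\}) \mid \{\omega\}\gg_2\{\omega'\}\bigr\}$. Uniqueness follows immediately from Corollary~\ref{cor:cba-to}: any $\bullet$ satisfying (RA1)--(RA4) produces a CBA with exactly this $\Omega^{*}$, and CBAs are determined by their singleton preferences.

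For the closed-form equation, Corollary~\ref{cor:cba-to} gives $G_3 = \Gen(\Omega^{*})$. Since $\Omega^{*} \subseteq \Lambda(G_1,G_2) \cup G_2$, we get $G_3 \subseteq \Gen(\Lambda(G_1,G_2) \cup G_2)$. For the reverse inclusion, $\Lambda(G_1,G_2) \subseteq \Omega^{*} \subseteq G_3$ and $G_2 \subseteq G_3$ by (RA1), so $\Lambda(G_1,G_2) \cup G_2 \subseteq G_3$; since $G_3$ is itself a belief algebra and $\Gen$ yields the smallest belief algebra containing its seed (Definition~\ref{dfn:gen} together with Corollary~\ref{cor-subalgebra}), we conclude $\Gen(\Lambda(G_1,G_2) \cup G_2) \subseteq G_3$.

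The main delicate step is the second case of the pointwise analysis: (RA4) is silent on pairs whose $G_2$-supports differ, so it is (RA1) combined with Lemma~\ref{lem:sup}(2) (to lift the backbone comparison in $G_2$ to singletons, using that $G_2$ is complete) and axiom (A2) inside $G_3$ that together must determine those singleton pairs. Once that is settled, everything else is a routine application of the singleton-characterization of CBAs and the minimality of $\Gen$; notably, (RA2) is not explicitly invoked, since (RA3) forces $G_3$ into CBA form and the three remaining postulates already fix every singleton entry.
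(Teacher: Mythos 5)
Your proof is correct, but it takes a more self-contained route than the paper's. The paper proves this theorem by detouring through total preorders: it forms $\preceq_1\circ\preceq_2$ with the operator of Theorem~\ref{thm:totuni}, lets $G$ be the corresponding CBA, shows $G=\Gen(G_2\cup\Lambda(G_1,G_2))$ via Corollary~\ref{cor:cba-to}, and only then runs the same two-case singleton analysis you do (its Case~1 splits on $G_2$-comparability, which by Lemma~\ref{pro:cba-to} is equivalent to your split on $I_2$-supports) to conclude $G_1\bullet G_2=G$. You instead never leave the algebra level: (RA3) plus Corollary~\ref{cor:cba-to} reduce everything to the singleton set $\Omega^*$; (RA4) settles pairs with equal supports; (RA1) combined with Lemma~\ref{lem:sup}(2) and axiom (A2) settles the rest --- and you rightly make the (A2) blocking step explicit, which the paper's Case~1 (``\ldots by (RA1)'') leaves implicit when upgrading ``if'' to ``iff''. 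Your closed form then follows from $G_1\bullet G_2=\Gen(\Omega^*)$ together with monotonicity and minimality of $\Gen$ (Definition~\ref{dfn:gen} and Corollary~\ref{cor-subalgebra}) rather than from the explicitly constructed $G$. What the paper's detour buys is twofold: it exhibits $\Gen(\Lambda(G_1,G_2)\cup G_2)$ \emph{unconditionally} as the CBA of $\preceq_1\circ\preceq_2$, independent of the assumption that an operator satisfying (RA1)--(RA4) exists (in your argument, that this set is a CBA is obtained only conditionally, via $G_3$), and the correspondence with $\circ$ is reused later in Corollary~\ref{cor-revpre}. Since the theorem is itself stated conditionally on such a $\bullet$, this costs you nothing here; and your closing observation that (RA2) is never invoked matches the paper, whose proof likewise does not use it.
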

  \begin{proof}
    Suppose the corresponding total preorders of $\gg_1,\gg_2$ are $\preceq_1,\preceq_2$ respectively. Let
   $\preceq=\preceq_1\circ\preceq_2$, where $\circ$ is the operator which satisfies (RE1)-(RE3). Then $\preceq$ is a total preorder, and $\omega\prec\omega'$ iff $\omega\prec_2\omega'$, or $\omega\sim_2\omega'$ and $\omega\prec_1\omega'$. Since $G_1,G_2$ are CBAs, we have $\omega\prec_2\omega'$ iff $\{\omega\}\gg_2\{\omega'\}$, $\omega\prec_1\omega'$ iff $\{\omega\}\gg_1\{\omega'\}$, and  $\omega\sim_2\omega'$ iff $I_2(\{\omega\}=I_2(\{\omega'\})$. We denote by $G=(2^W,\gg)$ the corresponding CBA of $\preceq$. Then $\{\omega\}\gg\{\omega'\}$ iff $\omega\prec\omega'$. That is to say,
   $\{\omega\}\gg\{\omega'\}$ iff $\{\omega\}\gg_2\{\omega'\}$, or $I_2(\{\omega\})=I_2(\{\omega'\})$ and $\{\omega\}\gg_1\{\omega'\}$. In summary, for any $\omega,\omega'\in W$, $(\{\omega\},\{\omega'\})\in G$ iff 
   $(\{\omega\},\{\omega'\})\in G_2\cup\Lambda(G_1,G_2)$. The ``if'' part here shows $\Lambda(G_1,G_2)\subseteq G$ and 
    $G_2\subseteq G$, because $G_2$ is a CBA and each CBA is totally decided by the preferences on single world sets according to Corollary~\ref{cor:cba-to}. Then $\Gen(G_2\cup\Lambda(G_1,G_2))\subseteq G$. The ``only if '' part here shows that if $(\{\omega\},\{\omega'\})\in G$, then we have $(\{\omega\},\{\omega'\})\in G_2\cup\Lambda(G_1,G_2)$, which means $G\subseteq \Gen(G_2\cup\Lambda(G_1,G_2))$ since $G$ is also a CBA. Therefore, we have $G= \Gen(G_2\cup\Lambda(G_1,G_2))$.
    
    To show that the result is unique,
  suppose that $\bullet$ is an operator on $BL_A$ which satisfies (RA1)--(RA4). Then we only need to show that if $G_1\bullet G_2=G_3$ then $G_3=G$. 
  By (RA3), $G_3$ is a CBA because $G_1$ and $G_2$ are CBAs. 
  Let $G_3=(2^W,\gg_3)$. Then $\forall \omega,\omega'\in W$, one of the following two cases is true.
  \begin{itemize}
  \item[](Case 1) If $\{\omega\}\gg_2\{\omega'\}$ or $\{\omega'\}\gg_2\{\omega\}$ then we have $\{\omega\}\gg_3\{\omega'\}$ or $\{\omega'\}\gg_3\{\omega\}$, respectively, by (RA1).
  \item[](Case 2) If $\{\omega\}\not\gg_2\{\omega'\}$ and  $\{\omega'\}\not\gg_2\{\omega\}$ then $I_2(\{\omega\})= I_2(\{\omega'\})$ is holding in $G_2$ by Proposition~\ref{pro:cba-to}. Therefore, $\{\omega\}\gg_3\{\omega'\}$ iff $\{\omega\}\gg_1\{\omega'\}$ by (RA4). 
  \end{itemize}
  Therefore, we have $(\{\omega\},\{\omega'\})\in G_3$ iff $(\{\omega\},\{\omega'\})\in G_2$ or $(\{\omega\},\{\omega'\})\in \Lambda(G_1,G_2)$. That is to say, $(\{\omega\},\{\omega'\})\in G_3$ iff $(\{\omega\},\{\omega'\})\in G$. Recall that $G$ and $G_3$ are both CBAs. Then we have $G_3=G$ by Corollary~\ref{cor:cba-to}.
  \end{proof}
This above theorem generalizes Theorem~\ref{thm:totuni} and shows that, under the postulates (RA1)-(RA4), the revision operator $\bullet$ is deterministic when applied to complete belief algebras. Specifically, the revision result $G_1 \bullet G_2$ is uniquely determined by combining the preference relations from $G_1$ and $G_2$ in a principled way. The set $\Lambda(G_1, G_2)$ captures the preferences from $G_1$ that are consistent with the structure of $G_2$, ensuring that the revision process preserves as much of the original belief information as possible while fully incorporating the new evidence. The following corollary establishes a direct correspondence between the revision operator $\bullet$ for complete belief algebras and the revision operator $\circ$ for their equivalent total preorders. 
  \begin{corollary}\label{cor-revpre}
  Suppose, $G_1$ and $ G_2$ are complete belief algebras, and $\preceq_1$ and $\preceq_2$ are their equivalent total preorders on worlds, respectively. If $\bullet$ satisfies (RA1)-(RA4), and $\circ$ satisfies (RE1)-(RE3), then the corresponding total preorder of $G_1\bullet G_2$ is exactly $\preceq_1\circ\preceq_2$.
  \end{corollary}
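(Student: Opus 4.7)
The plan is to reduce the corollary to a bookkeeping exercise that glues together Theorem~\ref{thm:rcba} (which pins down $G_1 \bullet G_2$) with Theorem~\ref{thm:totuni} (which pins down $\preceq_1 \circ \preceq_2$), using the CBA/total preorder dictionary of Lemma~\ref{pro:cba-to}. Concretely, I would first set $G_3 := G_1 \bullet G_2$ and invoke (RA3) to conclude that $G_3$ is a CBA, so by Lemma~\ref{pro:cba-to} it corresponds to a unique total preorder $\preceq_3$, and by Corollary~\ref{cor:cba-to} this preorder is determined by the single-world pairs $(\{\omega\},\{\omega'\}) \in G_3$. The task then reduces to showing $\preceq_3 = \preceq_1 \circ \preceq_2$, for which it suffices, again by Lemma~\ref{pro:cba-to}, to match strict orderings on singletons.

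Next I would translate the characterization of $G_3$ obtained inside the proof of Theorem~\ref{thm:rcba}, namely that $\{\omega\} \gg_3 \{\omega'\}$ iff $\{\omega\} \gg_2 \{\omega'\}$, or $I_2(\{\omega\}) = I_2(\{\omega'\})$ together with $\{\omega\} \gg_1 \{\omega'\}$. Using Lemma~\ref{pro:cba-to} applied to the CBAs $G_1$ and $G_2$, I would rewrite each of the three atomic conditions as a statement about $\preceq_1,\preceq_2$: $\{\omega\} \gg_i \{\omega'\}$ becomes $\omega \prec_i \omega'$, and $I_2(\{\omega\}) = I_2(\{\omega'\})$ becomes $\omega \sim_2 \omega'$ (since in a CBA two worlds share a backbone level exactly when they are $\sim$-equivalent under the induced preorder). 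This yields $\omega \prec_3 \omega'$ iff $\omega \prec_2 \omega'$, or $\omega \sim_2 \omega'$ and $\omega \prec_1 \omega'$, which is precisely the explicit formula for $\preceq_1 \circ \preceq_2$ derived in the proof of Theorem~\ref{thm:totuni}.

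To close, I would check indifference: $\omega \sim_3 \omega'$ (equivalently, neither singleton dominates the other in $G_3$) holds iff neither of the above strict cases applies, which by the same translation is equivalent to $\omega \sim_2 \omega'$ and $\omega \sim_1 \omega'$, matching the $\sim$-clause of $\preceq_1 \circ \preceq_2$ from Theorem~\ref{thm:totuni}. Since $\preceq_3$ and $\preceq_1 \circ \preceq_2$ agree on every ordered pair of worlds, they coincide as total preorders.

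The main obstacle, such as it is, lies in verifying the single non-trivial identification $I_2(\{\omega\}) = I_2(\{\omega'\}) \Leftrightarrow \omega \sim_2 \omega'$ for the CBA $G_2$; this is where one must actually use that $G_2$ is complete (not just a belief algebra), appealing to the second clause of Lemma~\ref{lem:sup} and the backbone characterization, since in a general belief algebra sharing a backbone level need not coincide with mutual incomparability on singletons. Once this step is granted, the rest of the proof is a direct substitution between the formulas furnished by Theorems~\ref{thm:rcba} and~\ref{thm:totuni}, with no new combinatorial content.
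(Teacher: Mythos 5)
Your proposal is correct and takes essentially the same route as the paper's own (sketch) proof: both extract the singleton-level characterization of $G_1\bullet G_2$ from the proof of Theorem~\ref{thm:rcba} and the explicit formula for $\preceq_1\circ\preceq_2$ from the proof of Theorem~\ref{thm:totuni}, then match them via the CBA/total-preorder dictionary of Lemma~\ref{pro:cba-to}. You are in fact somewhat more careful than the paper, which asserts the key identification $I_2(\{\omega\})=I_2(\{\omega'\})$ iff $\omega\sim_2\omega'$ without comment inside the proof of Theorem~\ref{thm:rcba}, whereas you correctly isolate it and justify it for CBAs via Lemma~\ref{lem:sup}(2) and the backbone structure.
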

  \begin{proof}
    This conclusion follows intuitively, and we only provide a proof sketch.

  Let $\preceq=\preceq_1\circ\preceq_2$ and $G=G_1\bullet G_2$. Note that by the proof of Theorem~\ref{thm:rcba}, the revision operator $\bullet$ under (RA1)-(RA4) ensures that in $G$ $\{\omega\}\gg\{\omega^*\}$ iff either $\{\omega\}\gg_2\{\omega^*\}$ (from $G_2$), or $I_2(\{\omega\})=I_2(\{\omega^*\})$ and $\{\omega\}\gg_1\{\omega^*\}$ (preserved from $G_1$).
  
  On the other hand, by the proof of Theorem~\ref{thm:totuni} the revision operator $\circ$ defined by (RE1)-(RE3) ensures that $\omega\prec\omega^*$ iff either $\omega\prec_2\omega^*$ (from $\preceq_2$) or $\omega\sim_2\omega^*$ and $\omega\prec_1\omega^*$ (preserved from $\preceq_1$).

  This structural correspondence guarantees that $G$ is exactly the CBA corresponding to $\preceq_1\circ\preceq_2$ by Lemma~\ref{pro:cba-to}.
  \end{proof}
  
  \begin{example}
  Suppose, $\Delta_1=\{\{\omega_1\}\gg\{\omega_2\}\gg\{\omega_3\}\gg\{\omega_4\}\}$ and $\Delta_2=\{\{\omega_2\}\gg\{\omega_1,\omega_3\}\gg\{\omega_4\}\}$ are
  backbones, then $G_1=\Gen(\Delta_1)$ and $G_2=\Gen(\Delta_2\cup\{(1,4),(3,4)\})$ are complete belief algebras. In this sense, $\Lambda(G_1,G_2)=\{(1,3)\}$, and $G_1\bullet G_2=\Gen(\{\{\omega_2\}\gg\{\omega_1\}\gg\{\omega_3\}\gg\{\omega_4\}\})$, where $\{\{\omega_2\}\gg\{\omega_1\}\gg\{\omega_3\}\gg\{\omega_4\}\}$ is
  the backbone of $G_1\bullet G_2$.
  \end{example}
  
  For the case where $G_1$ and $G_2$ are possibly incomplete belief algebras, we include the following postulate:
  \begin{enumerate}
  \item[](RA5) If $G_1\leq G_1'$, $G_2\leq G_2'$ then $G_1\bullet G_2\subseteq G_1'\bullet G_2'$.
  \end{enumerate}
  Recall that $G_1\leq G_1'$ means that $G_1\subseteq G_1'$ and $G_1$ and $G_1'$ have the same backbone. 
  (RA5) means that if $G_1$ and $G_2$ contain less information than $G'_1$ and $G'_2$, respectively, and they have the same core belief information (i.e., same backbones), 
  then  $G_1\bullet G_2$ also contains less belief information than $G'_1\bullet G'_2$. From (RA5), the following proposition is easy to verify.
  \begin{proposition}
  If $\bullet$ satisfies (RA5), then $G_1\bullet G_2\subseteq \Com(G_1)\bullet \Com(G_2)$.
  \end{proposition}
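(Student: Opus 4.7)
The approach is essentially a one-line application of the monotonicity postulate (RA5) to the canonical completion $\Com(\cdot)$. My plan is to first verify that $G_i \leq \Com(G_i)$ holds for $i = 1, 2$ directly from Definition~\ref{def:com}, and then instantiate (RA5) with $G_1' := \Com(G_1)$ and $G_2' := \Com(G_2)$ to obtain the desired inclusion.

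Concretely, I would argue as follows. By Definition~\ref{def:com}, $\Com(G_i)$ is \emph{by construction} the complete belief algebra that (i) contains $G_i$ as a sub-relation and (ii) shares the same backbone as $G_i$. Condition (i) gives $G_i \subseteq \Com(G_i)$, and together with condition (ii) this is exactly the definition of the partial-order relation $G_i \leq \Com(G_i)$ from Definition~\ref{def:com}. (Alternatively, one may invoke Theorem~\ref{thm:lat} / the accompanying remark, which identifies $\Com(\Gen(\Delta))$ as the largest element of $BA(\Delta)$, so every member of $BA(\Delta)$ sits below its completion.) Both hypotheses of (RA5) with $G_1' = \Com(G_1)$ and $G_2' = \Com(G_2)$ are thus satisfied.

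Applying (RA5) then immediately yields
\[
G_1 \bullet G_2 \;\subseteq\; \Com(G_1) \bullet \Com(G_2),
\]
which is precisely the claim. There is no real obstacle here: the entire content of the proposition is that $\Com(\cdot)$ is the top of its fibre in the $\leq$-lattice decomposition of $BA_L$, after which (RA5) fires on its own. The only minor care-point is to not confuse $\leq$ with $\subseteq$: one must note that $G_i$ and $\Com(G_i)$ genuinely share a backbone (not just satisfy set inclusion), which is immediate from the definition of $\Com$ but is the precise reason (RA5) — as opposed to some stronger unqualified monotonicity — is applicable.
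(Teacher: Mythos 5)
Your proposal is correct and matches the paper's intended argument exactly: the paper leaves this as ``easy to verify'' from (RA5), and the verification is precisely your observation that Definition~\ref{def:com} (together with Lemma~\ref{lem:sup}(3), which guarantees $\Com(G_i)$ exists and shares $G_i$'s backbone) gives $G_i \leq \Com(G_i)$, after which (RA5) applies with $G_i' = \Com(G_i)$. Your care-point about distinguishing $\leq$ from mere $\subseteq$ is well taken and is indeed the only substantive detail in the proof.
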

  The above proposition shows that $\Com(G_1)\bullet \Com(G_2)$ is an upper bound of $G_1\bullet G_2$ (under $\subseteq$). 
  
  Furthermore, we suppose a rational agent will keep maximal information from $G_1$ to $G_1\bullet G_2$ under (RA1)--(RA5), which is a ``minimal change'' rule.  Hence, we assume that $G_1\bullet G_2$ is maximum in $\Com(G_1)\bullet \Com(G_2)$, i.e., 
  there is no belief algebra  
  $\Gen(\Omega) \subseteq \Com(G_1)\bullet \Com(G_2)$
  such that $G_1\bullet G_2\subset \Gen(\Omega)$, where $\Omega \subseteq G_1 \cup G_2$. 
  Then we have the following postulate.
  \begin{enumerate}
  \item[](RA6) If $\Omega\subseteq G_1\cup G_2$ and $\Gen(\Omega)\subseteq \Com(G_1)\bullet \Com(G_2)$, $G_1\bullet G_2\subseteq \Gen(\Omega)$, then $G_1\bullet G_2= \Gen(\Omega)$.
  \end{enumerate}
  
  We arrive to one of the major results in this work:
  \begin{theorem}\label{thm:rev}
  Suppose $\bullet$ satisfies (RA1)-(RA6), and $G_1,G_2$ are belief algebras. Then the revision result of $G_1\bullet G_2$
  is unique, and  $G_1\bullet G_2=\Gen((G_1\cup G_2)\cap \Com(G_1)\bullet \Com(G_2))$.
  \end{theorem}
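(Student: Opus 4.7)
The plan is to identify a canonical candidate for $G_1\bullet G_2$, namely $\Gen(\Omega^*)$ where $\Omega^* := (G_1\cup G_2)\cap H$ and $H := \Com(G_1)\bullet \Com(G_2)$, and then show that any operator satisfying (RA1)--(RA6) must produce exactly this belief algebra. Since $\Com(G_1)$ and $\Com(G_2)$ are both CBAs, Theorem~\ref{thm:rcba} already pins down $H$ uniquely, so once the equality $G_1\bullet G_2 = \Gen(\Omega^*)$ is established, both the explicit formula claimed in the theorem and uniqueness of the revision follow simultaneously.

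First I would check that $\Omega^*$ fits the antecedent of postulate (RA6): the containment $\Omega^* \subseteq G_1\cup G_2$ is immediate, while $\Gen(\Omega^*)\subseteq H$ follows from Corollary~\ref{cor-subalgebra} because $H$ is a belief algebra containing $\Omega^*$ and $\Gen(\Omega^*)$ is by construction the smallest such. Next, I would use (RA2) to write $G_1\bullet G_2 = \Gen(\Omega)$ for some $\Omega\subseteq G_1\cup G_2$, and invoke the proposition preceding (RA6) (a consequence of (RA5)) to get $G_1\bullet G_2 \subseteq H$. Since $\Omega\subseteq \Gen(\Omega) = G_1\bullet G_2\subseteq H$, it follows that $\Omega\subseteq (G_1\cup G_2)\cap H = \Omega^*$, and therefore $G_1\bullet G_2 = \Gen(\Omega)\subseteq \Gen(\Omega^*)$. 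All hypotheses of (RA6) being met, (RA6) then forces $G_1\bullet G_2 = \Gen(\Omega^*)$, which is exactly the asserted formula.

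The only genuinely delicate point is the status of the upper bound $H$: although the theorem only assumes $\bullet$ satisfies (RA1)--(RA6) on arbitrary belief algebras, the value of $H = \Com(G_1)\bullet \Com(G_2)$ must already be determined by the postulates, and this is precisely what Theorem~\ref{thm:rcba} supplies through (RA1)--(RA4) for the CBA inputs $\Com(G_1),\Com(G_2)$. Beyond this, one should sanity-check that (RA1) is consistent with the candidate: $G_2\subseteq \Com(G_2)\subseteq H$ using (RA1) at the CBA level, so $G_2\subseteq \Omega^*$ and hence $G_2\subseteq \Gen(\Omega^*)$, which is reassuring but not needed for the argument. Uniqueness is then immediate, since the right-hand side $\Gen((G_1\cup G_2)\cap \Com(G_1)\bullet \Com(G_2))$ depends only on the input belief algebras $G_1$ and $G_2$.
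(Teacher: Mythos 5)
Your proposal is correct and takes essentially the same route as the paper's own proof: you use (RA2) to write $G_1\bullet G_2=\Gen(\Omega)$ with $\Omega\subseteq G_1\cup G_2$, the proposition following (RA5) to bound it by $\Com(G_1)\bullet\Com(G_2)$ (itself pinned down via Theorem~\ref{thm:rcba}), monotonicity of $\Gen$ to get $G_1\bullet G_2\subseteq\Gen\bigl((G_1\cup G_2)\cap \Com(G_1)\bullet\Com(G_2)\bigr)$, and (RA6) to force equality. Your explicit check that $\Gen(\Omega^*)\subseteq\Com(G_1)\bullet\Com(G_2)$ via Corollary~\ref{cor-subalgebra} mirrors the paper's appeal to that corollary, so nothing is missing.
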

  \begin{proof}
  $\Com(G_1)\bullet \Com(G_2)$ is a defined complete belief algebra by Theorem~\ref{thm:rcba} and $\bullet$ satisfies (RA1)-(RA6) .
  Since $(G_1\cup G_2)\cap \Com(G_1)\bullet \Com(G_2)$ is a subset of $\Com(G_1)\bullet \Com(G_2)$, $\Gen((G_1\cup G_2)\cap \Com(G_1)\bullet \Com(G_2))$ is a belief algebra by Corollary~\ref{cor-subalgebra}. By (RA2), there is a $\Omega\subseteq G_1\cup G_2$ such that $G_1\bullet G_2=\Gen(\Omega)$. By (RA5), we have $\Gen(\Omega)\subseteq \Com(G_1)\bullet \Com(G_2)$. Then, $\Omega\subseteq \Com(G_1)\bullet \Com(G_2)$. Furthermore, we have $\Omega\subseteq ((G_1\cup G_2)\cap \Com(G_1)\bullet \Com(G_2))$. As a result, we can conclude that $G_1\bullet G_2=\Gen(\Omega)\subseteq \Gen((G_1\cup G_2)\cap \Com(G_1)\bullet \Com(G_2))$, where $\Gen((G_1\cup G_2)\cap \Com(G_1)\bullet \Com(G_2))$ is a belief algebra. By (RA6), we have  $G_1\bullet G_2=\Gen((G_1\cup G_2)\cap \Com(G_1)\bullet \Com(G_2))$.
  \end{proof}
  The above theorem shows that there is only one operator that satisfies (RA1)--(RA6). The postulate (RA5) provides an upper bound for the revision result. On the other hand, (RA6) imposes a conditional maximality requirement on the revision result, which, together with (RA5), leads to the uniqueness of the revision operator $\bullet$. 
In the next section, we will discuss how to algorithmically compute the revision result of this operator, providing a practical method for performing iterated belief revision in the belief algebra framework.
  
  \section{A Practical Algorithm and Discussion}~\label{sec:alg}
  In this section, we provide a practical algorithm for computing the revision result \( G_1 \bullet G_2 \) based on the postulates (RA1)--(RA6). 
\subsection{Algorithm}
 We begin with a direct characterization of the revision result.
  \begin{proposition}
  Suppose $G_1$ and $G_2$ are belief algebras, $\bullet$ satisfies $(RA1)$--$(RA6)$, and $G_*=\Com(G_1)\bullet \Com(G_2)$.
  Then $G_1\bullet G_2=\Gen((G_1\cap G_*)\cup G_2)$, and $G_1\cap G_*=\{(U,V)\mid U\gg_1 V,$ and $I_*(U)\gg_*I_*(V)\}$.
  \end{proposition}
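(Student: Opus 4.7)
The plan is to leverage the uniqueness characterization from Theorem~\ref{thm:rev}, which gives $G_1\bullet G_2 = \Gen((G_1\cup G_2)\cap G_*)$, and then to rewrite this expression by (i) absorbing the $G_2$-part of the intersection and (ii) unfolding the $G_1$-part using the fact that $G_*$ is a CBA. No new postulate invocation is needed beyond what is already packaged in $G_*$ being defined.

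First, I would establish $G_2 \subseteq G_*$. Applying (RA1) at the level of $\Com(G_1)\bullet\Com(G_2)$ gives $\Com(G_2)\subseteq G_*$, and since $G_2\subseteq \Com(G_2)$ by Definition~\ref{def:com}, we conclude $G_2\subseteq G_*$. Hence $G_2\cap G_* = G_2$, and distributing the intersection over the union of binary relations yields
\[
(G_1\cup G_2)\cap G_* \;=\; (G_1\cap G_*)\cup (G_2\cap G_*) \;=\; (G_1\cap G_*)\cup G_2 .
\]
Plugging this into the formula from Theorem~\ref{thm:rev} gives the first claim $G_1\bullet G_2 = \Gen((G_1\cap G_*)\cup G_2)$.

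Second, I would unfold $G_1\cap G_*$. By Theorem~\ref{thm:rcba}, $G_* = \Com(G_1)\bullet\Com(G_2)$ is a complete belief algebra, so Lemma~\ref{lem:sup}(2)--(3) gives the explicit description
\[
\mathord{\gg_*} \;=\; \{(U,V)\in R_W \mid U,V\ne\emptyset,\; I_*(U)\gg_* I_*(V)\}\cup Tr(W).
\]
Intersecting with $\gg_1$ and noting that $Tr(W)\subseteq \gg_1$ holds automatically by (A1), the pairs in $G_1\cap G_*$ are precisely those $(U,V)$ with $U\gg_1 V$ and, for $V\ne\emptyset$, $I_*(U)\gg_* I_*(V)$. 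This is exactly the advertised description of $G_1\cap G_*$.

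There is no real obstacle here, only a bookkeeping subtlety: one must treat the pairs of the form $(U,\emptyset)\in Tr(W)$ consistently on both sides, since the support $I_*$ is only defined for nonempty sets. This is handled trivially by the fact that $Tr(W)$ is contained in every belief algebra, so such pairs lie in both $G_1$ and $G_*$ automatically and do not affect the equality. Everything else is routine manipulation built on Theorem~\ref{thm:rev} and Lemma~\ref{lem:sup}.
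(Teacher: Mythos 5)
Your proof is correct and follows essentially the same route as the paper's: invoke Theorem~\ref{thm:rev}, absorb $G_2\cap G_* = G_2$, and unfold $G_1\cap G_*$ via Lemma~\ref{lem:sup} since $G_*$ is a CBA. The only (equally valid) local difference is how you get $G_2\subseteq G_*$ --- you apply (RA1) at the level of the completions and use $G_2\subseteq\Com(G_2)$, whereas the paper chains (RA1) and (RA5) through $G_2\subseteq G_1\bullet G_2\subseteq G_*$; your explicit remark about the $Tr(W)$ pairs is a small bookkeeping point the paper leaves implicit.
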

  \begin{proof}
  By Theorem~\ref{thm:rev}, we have $G_1\bullet G_2=\Gen((G_1\cup G_2)\cap G_*)=\Gen((G_1\cap G_*)\cup(G_2\cap G_*))$. From (RA1) and (RA5), we have $G_2\subseteq G_1\bullet G_2\subseteq G_*$. Then $G_2\cap G_*=G_2$. Then $G_1\bullet G_2=\Gen((G_1\cap G_*)\cup G_2)$. Moreover, as
  $G_*$ is a complete belief algebra, $U\gg_* V$ iff $I_*(U)\gg I_*(V)$ by Lemma~\ref{lem:sup}. Hence, $G_1\cap G_*=\{(U,V)\mid U\gg_1 V,$ and $I_*(U)\gg_*I_*(V)\}$.
  \end{proof}
  
  Therefore, we can use Algorithm~\ref{alg:rev} to get $G_1 \bullet G_2$.
  \begin{algorithm}[htpb]
    \caption{Definite revision on $BA_L$.}
    \label{alg:rev}
    \In{%
      Current belief algebra $G_1$ and
      new evidence $G_2$.
    }
    \Out{Resulting belief algebra $G_1 \bullet G_2$.}
      Calculate $\Com(G_1)$ and $\Com(G_2)$ by Definition~\ref{def:com}\;
      $G_* \leftarrow \Com(G_1)\bullet \Com(G_2)$ by Theorem~\ref{thm:rcba}\;
      $G_1\cap G_* \leftarrow \{(U,V)\mid U\gg_1 V,$ and $I_*(U)\gg_*I_*(V)\}$\;
      $G_1\bullet G_2 \leftarrow \Gen((G_1\cap G_*)\cup G_2)$ by Definition~\ref{dfn:gen}\;
      \Return{$G_1\bullet G_2$.}
    \end{algorithm}

    In the first step of the algorithm, we need to compute $\Com(G_1)$ and $\Com(G_2)$.
    Given a belief algebra $G=(2^W, \gg)$, to calculate $\Com(G)$, we need to obtain the backbone $\{U_1 \gg \cdots \gg U_n\}$ of $G$ first. Let $U_1= \bigcap \{ U \subseteq W \mid U \gg W\setminus U\}$, $W_{i+1} = W_{i} \setminus U_i$ and $W_1 = W$. Then $U_i = \bigcap \{U  \subseteq W_{i+1} \mid U \gg W_{i+1} \setminus U\}$. It can be verified that $\{U_1 \gg \cdots \gg U_n\}$ is indeed the backbone of $G$, and more details can be found in~\cite{meng2015belief}. With the backbone of $G$, $\Com(G) = (2^W, \gg^*)$ can be constructed by defining $\gg^*$ as $\forall U,V \in R_W$, $U \gg^* V$ if $I(U) \gg I(V)$. Then $G_*$ can be computed by Theorem~\ref{thm:rcba}, and thus $G_1 \bullet G_2$ can be obtained accordingly. Note that one can get $\Gen(\gg)$ for some $\gg$ by calculating the closure of $\gg$ under (A1), (A3), and (A4).
    
    The following is an example of applying Algorithm~\ref{alg:rev} to the revision scenario in Example~\ref{eg:dp}.
  \begin{example}
  In Example~\ref{eg:dp}, as Bob's current belief corresponds to a total preorder $\{\omega_1\sim\omega_2\prec\omega_3\sim\omega_4\}$, it can be characterized as a CBA $G_1=(2^W, \gg_1)$, where
  \begin{align*}
  \gg_1 & =\{(1,3),(1,4),(2,3),(2,4),(12,3)(12,4),(12,34),\\
  &\quad(13,4),(14,3),(23,4),(24,3),(123,4),(124,3)\}\\
  &\quad\cup Tr(W).
  \end{align*}
  Here for simplicity, we use a sequence of numbers $i_1i_2\ldots i_k$ to represent the set of worlds $\{\omega_{i_1},\omega_{i_2}\ldots, \omega_{i_k}\}$ ($1\le i_1,\ldots, i_k \le 4$). For example, $(1,3)$ represents $(\{\omega_{1}\}, \{\omega_3\})$ and $(12,3)$ represents $(\{\omega_1,\omega_2\},\{\omega_3\})$.
  The new evidence $\mu$ with worlds $[\mu]=\{\omega_1,\omega_4\}$ can be characterized as $G_2 = (2^W,\gg_2)$, where
  \begin{align*}
   &\gg_2 =\Gen(\{([\mu],[\neg\mu])\})\\
  &=\{(14,23),(14,2),(14,3),(142,3),(143,2)\}\cup Tr(W).
  \end{align*}
  Note that the backbone of $G_2$ is $\{\omega_1,\omega_4\}\gg_2\{\omega_2,\omega_3\}$.
   Then
  \begin{align*}
  \Com(G_2) & =\{(1,2),(1,3),(4,2),
  (4,3),(14,23),(14,2),
  \\
  &\quad (14,3), (12,3),(13,2),(24,3),
  (34,2),\\
  &\quad (142,3),(143,2)\}\cup Tr(W).
  \end{align*}
By Theorem~\ref{thm:rcba}, we know $\Lambda(G_1,\Com(G_2))=\{(1,4),(2,3)\}$, and
  \begin{align*}
  &G_* = \Com(G_1)\bullet \Com(G_2)  \\
  =& G_1\bullet \Com(G_2) = \Gen(\Lambda(G_1,\Com(G_2)) \cup \Com(G_2))\\
  =&\{(1,4),(1,2),(1,3),(4,2),(4,3),(2,3),(1,24),\\
  &(1,23),(1,34),(1,234),(4,23),(14,2),(14,3),(14,23),\\
  &(13,4),(13,2),(13,24),(12,3),(12,4),(12,34),(42,3),\\
  &(43,2),(123,4),(124,3),(134,2)\}\cup Tr(W).\\
  \end{align*}
  Then the backbone of $G_*$ is $\{\{\omega_1\}\gg_*\{\omega_4\}\gg_*\{\omega_2\}\gg_*\{\omega_3\}\}$, and $G_1 \cap G_*$ is
  \begin{align*}
    &G_1 \cap G_* = \{(1,3),(1,4),(2,3),(12,3),(12,4),\\
    &\quad(12,34),(13,4),(14,3),(24,3),(123,4),(124,3)\}\\
    &\quad\cup Tr(W).
  \end{align*}
  Then 
  \begin{align*}
  &G_1\bullet G_2 \\
  &=\Gen(\{(1,3),(1,4),(2,3),(12,3),(12,4),(12,34),\\
  &\quad(13,4),(14,3),(24,3),(123,4),(124,3)\}\cup G_2)\\
  &=\{(1,2),(1,3),(1,4),(2,3),(1,23),(1,24),\\
  &\quad(1,34),(1,234),(12,3),(12,4),(12,34),(13,2),\\
  &\quad(13,4),(13,24),(14,2),(14,3),(14,23),\\
  &\quad(24,3),(123,4),(124,3)\}\cup Tr(W)
  \end{align*}
  The backbone of $G_1\bullet G_2$ is $\Delta_3=\{\omega_1\}\gg_3\{\omega_2,\omega_4\}\gg_3\{\omega_3\}$.
  It is not difficult to verify that $G_1\bullet G_2=\Gen(\Delta_3\cup \{(2,3)\})$. Furthermore,
  $G_1\bullet G_2$ is also equal to $\Gen(\{(1,4),(1,2),(1,3),(2,3)\})$. In other words, $G_1\bullet G_2$ is
  generated by $\{\omega_1\prec\omega_4, \omega_1\prec\omega_2, \omega_1\prec\omega_3, \omega_2\prec\omega_3\}$.
  In Example~\ref{eg:dp}, these orderings are exactly the part that must be maintained under the DP framework.
  On the other hand, if the new evidence is $\{\mu, (\neg b \mid \neg f)\}$, where $[\mu] = \{\omega_1, \omega_4\}$ instead, then the
  revision result under the proposed framework will be $\Gen(\{\{\omega_1\} \gg \{\omega_4\} \gg \{\omega_2\} \gg \{\omega_3\}\})$, because $(\neg b \mid \neg f)$ will induce the preference information $\{\omega_4\} \gg \{\omega_2\}$. 
  \end{example}

  Now we return to the traditional belief revision setting, where the current belief is represented as a total preorder $\preceq$ on possible worlds, and the new evidence is a formula $\mu$. This revision setting can be viewed as revising a complete belief algebra by a new belief algebra generated by the formula $\mu$. Suppose that $\bullet$ is the revision operator on $BA_L$ satisfying (RA1)--(RA6). Then the revision process proceeds as follows:
  \begin{itemize}
    \item (Step 1) Represent $\preceq$ and $\mu$ by belief algebras. Let $G_{1}=(2^W,\gg_{1})$ be the corresponding complete belief algebra of $\preceq$ i.e., $\omega \prec\omega'$ iff $\{\omega\}\gg_{1}\{\omega'\}$. Similarly, $\mu$ can be equivalently represented by $G_{\mu}=\Gen(\gg_{\mu}  := \{([\mu],[\neg\mu])\})$.
    \item (Step 2) Calculate  $G_*$. Note that $\Com(G_{\mu})$ 
    is equivalent to a total preorder $\preceq_{\mu}$ s.t. $\omega\prec_\mu\omega'$ iff $\omega\in [\mu]$ and $\omega'\in[\neg\mu]$. Moreover, $G_*=G_{1}\bullet \Com(G_{\mu})$ is also equivalent to a total preorder $\preceq_*$.  
    Following (RE1)-(RE3) (by Corollary~\ref{cor-revpre}, (RA1)-(RA4) equivalently), we can conclude that the strict part of $\preceq_*$ is as follows.
        \begin{itemize} 
        \item If $\omega_1,\omega_2\in [\mu]$ then $\omega_1\prec_*\omega_2$ iff $\omega_1\prec\omega_2$.
        \item If $\omega_1,\omega_2\in [\neg\mu]$ then $\omega_1\prec_*\omega_2$ iff $\omega_1\prec\omega_2$.
        \item If $\omega_1\in [\mu]$ and $\omega_2\in[\neg\mu]$ then $\omega_1\prec_*\omega_2$. 
        \end{itemize}
    \item (Step 3) Calculate $G_1\cap G_*$. From the result of last step, we can see that 
     \begin{itemize}
     \item If $\omega_1,\omega_2\in [\mu]$, then  $(\{\omega_1\},\{\omega_2\})\in G_1\cap G_*$ iff $(\{\omega_1\},\{\omega_2\})\in G_1$.
     \item If $\omega_1,\omega_2\in [\neg\mu]$, then  $(\{\omega_1\},\{\omega_2\})\in G_1\cap G_*$ iff $(\{\omega_1\},\{\omega_2\})\in G_1$.
     \item If $\omega_1\in [\mu]$, $\omega_2\in[\neg\mu]$ and $(\{\omega_1\},\{\omega_2\})\in G_1$, then $(\{\omega_1\},\{\omega_2\})\in G_1\cap G_*$.
     \end{itemize}
     It should be noted that, the preferences on single worlds in $[\mu]$ and $[\neg\mu]$ remain unchanged after revision, and $\omega \prec \omega'$ in $\preceq$ is also maintained if $\omega \in [\mu]$ and $\omega' \in [\neg\mu]$. 
    \item (Step 4) Calculate $G_{1}\bullet G_{\mu}=\Gen((G_1\cap G_*)\cup G_{\mu})$. Then we get the revision result $G_{1}\bullet G_{\mu}$.
    \end{itemize}
      \begin{remark}
        It is evident that the above revision process attempts to preserve as much information from $\preceq$ as possible, particularly those preferences consistent with $[\mu] \gg [\neg\mu]$. This strategy aligns with the principles of the AGM and DP frameworks. However, unlike these frameworks, the specific preference relations to be preserved are determined by the upper bound $G_1 \bullet \Com(G_\mu)$. Only those preferences consistent with $G_1 \bullet \Com(G_\mu)$  are included in the final result, and they must be included. This is the reason why the revision operator produces a unique result.
      \end{remark}

  \subsection{Discussion}

    Our iterative framework builds upon~\cite{meng2015belief}.  The belief algebra framework naturally extends to scenarios where the agent's beliefs are partial or incomplete. Unlike total preorders, which require a complete ranking of all worlds, belief algebras allow for the representation of preferences over subsets of worlds, even when some preferences are unspecified. This flexibility is particularly useful in real-world applications where the agent may have limited or uncertain information. 
    
    A key distinction with \cite{meng2015belief} is that we provide a deeper analysis of the structure of \emph{belief algebra} and introduce two core revision rules, (RA5) and (RA6). (RA5) imposes a macro-level constraint on belief revision, ensuring that the revision result does not exceed the outcome under complete information when belief information is insufficient. (RA6), on the other hand, requires preserving as much of the original belief information as possible under these constraints. Interestingly, these rules induce a unique revision result. Under our framework, agents with the same belief algebra and evidence will produce identical revision outcomes. While traditional views attribute different revision results to varying operators, we argue that rational agents share highly similar revision operators, and differences arise from their distinct belief algebras.

  \section{Conclusion}\label{sec:conc}
    In this paper, we 
    proposed an iterated belief revision framework based on \emph{belief algebra} where the current belief state, new evidence, and revision results are all represented as belief algebras. Through a deep analysis of the structure of belief algebra and inspired by existing principles of belief revision, we devised natural postulates for rational revision behaviors, including (RA5), which imposes an upper-bound constraint on revision results, ensuring that no revision exceeds the outcome under complete information, and (RA6), which requires preserving as much of the original belief information as possible while satisfying the upper-bound constraint. Interestingly, these postulates induce a unique revision operator, providing a deterministic and principled approach to belief revision. This uniqueness offers a clear guideline for selecting specific revision operators in practical applications.
    
    Moreover, we developed a practical revision algorithm under the new framework, demonstrating its feasibility for real-world use. 
    In future work, we aim to explore efficient methods for representing original belief information (e.g., logical statements or preferences) as belief algebra. Additionally, we will focus on improving the efficiency of our algorithm, reducing its complexity (currently, it is exponential to the number of worlds), and testing its application in specific domains, such as knowledge or rule revision in large language models.

  \section*{Acknowledgments}
  We would like to thank the anonymous reviewers for their invaluable help to improve the paper. 
This work was supported by the National Natural Science Foundation of
China under Grant numbers 61806170 and 62276218; the Fundamental Research Funds for the Central Universities under Grant numbers 2682022ZTPY082 and 2682023ZTPY027; the French National Research Agency under Grant number SA21PD01; and University of Montpellier under Grant number PP21PD01-RM06.

\bibliographystyle{named}
\bibliography{iterated-extend}

\begin{thebibliography}{}

\bibitem[\protect\citeauthoryear{Alchourron \bgroup \em et al.\egroup
  }{1985}]{alchourron1985logic}
Carlos~E. Alchourron, Peter G{\"a}rdenfors, and David Makinson.
\newblock On the logic of theory change: Partial meet contraction and revision
  functions.
\newblock {\em The Journal of Symbolic Logic}, 50(2):510--530, 1985.

\bibitem[\protect\citeauthoryear{Andrikopoulou \bgroup \em et al.\egroup
  }{2025}]{andrikopoulou-2025}
Maria Andrikopoulou, Theofanis Aravanis, James~P. Delgrande, Panagis Karazeris,
  and Pavlos Peppas.
\newblock {Filters-based revision}.
\newblock {\em Annals of Mathematics and Artificial Intelligence}, 2025.

\bibitem[\protect\citeauthoryear{Aravanis \bgroup \em et al.\egroup
  }{2019}]{AravanisPW19}
Theofanis~I. Aravanis, Pavlos Peppas, and Mary{-}Anne Williams.
\newblock Observations on darwiche and pearl's approach for iterated belief
  revision.
\newblock In {\em IJCAI}, pages 1509--1515, 2019.

\bibitem[\protect\citeauthoryear{Aravanis}{2023}]{Aravanis23}
Theofanis~I. Aravanis.
\newblock Collective belief revision.
\newblock {\em Journal of Artificial Intelligence Research}, 78:1221--1247,
  2023.

\bibitem[\protect\citeauthoryear{Baroni \bgroup \em et al.\egroup
  }{2022}]{BaroniFGS22}
Pietro Baroni, Eduardo Ferm{\'{e}}, Massimiliano Giacomin, and
  Guillermo~Ricardo Simari.
\newblock Belief revision and computational argumentation: {A} critical
  comparison.
\newblock {\em Journal of Logic, Language and Information}, 31(4):555--589,
  2022.

\bibitem[\protect\citeauthoryear{Benferhat \bgroup \em et al.\egroup
  }{2000}]{benferhat2000iterated}
Salem Benferhat, S{\'e}bastien Konieczny, Odile Papini, and Ram{\'o}n~Pino
  P{\'e}rez.
\newblock Iterated revision by epistemic states: axioms, semantics and syntax.
\newblock In {\em ECAI}, pages 13--17, 2000.

\bibitem[\protect\citeauthoryear{Benferhat \bgroup \em et al.\egroup
  }{2005}]{benferhat2005revision}
Salem Benferhat, Sylvain Lagrue, and Odile Papini.
\newblock Revision of partially ordered information: Axiomatization, semantics
  and iteration.
\newblock In {\em IJCAI}, pages 376--381, 2005.

\bibitem[\protect\citeauthoryear{Bonanno}{2025}]{Bonanno25}
Giacomo Bonanno.
\newblock A kripke-lewis semantics for belief update and belief revision.
\newblock {\em Artificial Intelligence}, 339:104259, 2025.

\bibitem[\protect\citeauthoryear{Booth and Chandler}{2016}]{BoothC16}
Richard Booth and Jake Chandler.
\newblock Extending the harper identity to iterated belief change.
\newblock In {\em IJCAI}, pages 987--993, 2016.

\bibitem[\protect\citeauthoryear{Booth and Meyer}{2006}]{booth2006admissible}
Richard Booth and Thomas~Andreas Meyer.
\newblock Admissible and restrained revision.
\newblock {\em Journal of Artificial Intelligence Research}, 26:127--151, 2006.

\bibitem[\protect\citeauthoryear{Boutilier}{1996}]{boutilier1996iterated}
Craig Boutilier.
\newblock Iterated revision and minimal change of conditional beliefs.
\newblock {\em Journal of Philosophical Logic}, 25(3):263--305, 1996.

\bibitem[\protect\citeauthoryear{Darwiche and Pearl}{1997}]{darwiche1997logic}
Adnan Darwiche and Judea Pearl.
\newblock On the logic of iterated belief revision.
\newblock {\em Artificial Intelligence}, 89(1):1--29, 1997.

\bibitem[\protect\citeauthoryear{Diller \bgroup \em et al.\egroup
  }{2015}]{DillerHLRW15}
Martin Diller, Adrian Haret, Thomas Linsbichler, Stefan R{\"{u}}mmele, and
  Stefan Woltran.
\newblock An extension-based approach to belief revision in abstract
  argumentation.
\newblock In {\em IJCAI}, pages 2926--2932, 2015.

\bibitem[\protect\citeauthoryear{Doyle}{1979}]{Doyle79}
Jon Doyle.
\newblock A truth maintenance system.
\newblock {\em Artificial Intelligence}, 12(3):231--272, 1979.

\bibitem[\protect\citeauthoryear{Ferm{\'{e}} and
  Hansson}{2011}]{Hansson2011Special}
Eduardo~L. Ferm{\'{e}} and Sven~Ove Hansson.
\newblock {{AGM}} 25 years: Twenty-five years of research in belief change.
\newblock {\em Journal of Philosophical Logic}, 40(2):295--331, 2011.

\bibitem[\protect\citeauthoryear{Harper}{1976}]{1976Rational}
W.~L. Harper.
\newblock Rational conceptual change.
\newblock {\em PSA: Proceedings of the Biennial Meeting of the Philosophy of
  Science Association}, 1976:462--494, 1976.

\bibitem[\protect\citeauthoryear{Hunter and Boyarinov}{2022}]{HunterB22}
Aaron Hunter and Konstantin Boyarinov.
\newblock {BRL:} {A} toolkit for learning how an agent performs belief
  revision.
\newblock In Ana~Paula Rocha, Luc Steels, and H.~Jaap van~den Herik, editors,
  {\em ICAART}, pages 753--756. {SCITEPRESS}, 2022.

\bibitem[\protect\citeauthoryear{Jin and Thielscher}{2007}]{jin2007iterated}
Yi~Jin and Michael Thielscher.
\newblock Iterated belief revision, revised.
\newblock {\em Artificial Intelligence}, 171(1):1--18, 2007.

\bibitem[\protect\citeauthoryear{Katsuno and
  Mendelzon}{1991}]{katsuno1991propositional}
Hirofumi Katsuno and Alberto~O Mendelzon.
\newblock Propositional knowledge base revision and minimal change.
\newblock {\em Artificial Intelligence}, 52(3):263--294, 1991.

\bibitem[\protect\citeauthoryear{Kern{-}Isberner \bgroup \em et al.\egroup
  }{2024}]{IsbernerHH24}
Gabriele Kern{-}Isberner, Alexander Hahn, Jonas Haldimann, and Christoph
  Beierle.
\newblock Total preorders vs ranking functions under belief revision - the
  dynamics of empty layers.
\newblock In {\em KR}, 2024.

\bibitem[\protect\citeauthoryear{Lehmann}{1995}]{Lehmann95}
Daniel Lehmann.
\newblock Belief revision, revised.
\newblock In {\em IJCAI}, pages 1534--1540, 1995.

\bibitem[\protect\citeauthoryear{Liberatore}{2024}]{Liberatore24}
Paolo Liberatore.
\newblock Representing states in iterated belief revision.
\newblock {\em Artificial Intelligence}, 336:104200, 2024.

\bibitem[\protect\citeauthoryear{Liu \bgroup \em et al.\egroup
  }{2023}]{LiuSZF23}
Likang Liu, Keke Sun, Chunlai Zhou, and Yuan Feng.
\newblock Two views of constrained differential privacy: Belief revision and
  update.
\newblock In Brian Williams, Yiling Chen, and Jennifer Neville, editors, {\em
  AAAI}, pages 6450--6457, 2023.

\bibitem[\protect\citeauthoryear{{Ma} \bgroup \em et al.\egroup
  }{2015}]{ma2015a}
Jianbing {Ma}, Weiru {Liu}, and Salem {Benferhat}.
\newblock A belief revision framework for revising epistemic states with
  partial epistemic states.
\newblock {\em International Journal of Approximate Reasoning}, 59:20--40,
  2015.

\bibitem[\protect\citeauthoryear{Meng \bgroup \em et al.\egroup
  }{2015}]{meng2015belief}
Hua Meng, Hui Kou, and Sanjiang Li.
\newblock Belief revision with general epistemic states.
\newblock In {\em AAAI}, pages 1553--1559, 2015.

\bibitem[\protect\citeauthoryear{Nayak \bgroup \em et al.\egroup
  }{2003}]{nayak2003dynamic}
Abhaya~C Nayak, Maurice Pagnucco, and Pavlos Peppas.
\newblock Dynamic belief revision operators.
\newblock {\em Artificial Intelligence}, 146(2):193--228, 2003.

\bibitem[\protect\citeauthoryear{Peppas and Williams}{2014}]{peppas2014belief}
Pavlos Peppas and Mary-Anne Williams.
\newblock Belief change and semiorders.
\newblock In {\em KR}, 2014.

\bibitem[\protect\citeauthoryear{Sauerwald and Thimm}{2024}]{SauerwaldT24}
Kai Sauerwald and Matthias Thimm.
\newblock The realizability of revision and contraction operators in epistemic
  spaces.
\newblock pages 665--670, 2024.

\bibitem[\protect\citeauthoryear{Spohn}{1988}]{spohn1988ordinal}
Wolfgang Spohn.
\newblock Ordinal conditional functions: A dynamic theory of epistemic states.
\newblock In W.~Spohn, editor, {\em Causation, Coherence, and Concepts}, pages
  19--41. Springer, 1988.

\bibitem[\protect\citeauthoryear{{van Harmelen} \bgroup \em et al.\egroup
  }{2008}]{vanharmelenHandbook2008}
Frank {van Harmelen}, Vladimir Lifschitz, and Bruce Porter.
\newblock {\em Handbook of {{Knowledge Representation}}}.
\newblock Elsevier, 2008.

\bibitem[\protect\citeauthoryear{Wilie \bgroup \em et al.\egroup
  }{2024}]{Bryan24}
Bryan Wilie, Samuel Cahyawijaya, Etsuko Ishii, Junxian He, and Pascale Fung.
\newblock Belief revision: The adaptability of large language models reasoning.
\newblock In {\em EMNLP}, pages 10480--10496, 2024.

\bibitem[\protect\citeauthoryear{Williams}{1994}]{1994Transmutations}
Mary{-}Anne Williams.
\newblock Transmutations of knowledge systems.
\newblock In {\em KR}, pages 619--629, 1994.

\bibitem[\protect\citeauthoryear{Williams}{1996}]{Williams1996ApplicationsOB}
Mary-Anne Williams.
\newblock Applications of belief revision.
\newblock In {\em Transactions and Change in Logic Databases}, 1996.

\bibitem[\protect\citeauthoryear{Zhang}{2010}]{Zhang10}
Dongmo Zhang.
\newblock A logic-based axiomatic model of bargaining.
\newblock {\em Artificial Intelligence}, 174(16-17):1307--1322, 2010.

\end{thebibliography}

\end{document}